\documentclass[10pt]{article}
\usepackage[accepted]{tmlr}

\usepackage{amsmath,amsfonts,bm}









\def\eqref#1{equation~\ref{#1}}









\def\1{\bm{1}}










\DeclareMathAlphabet{\mathsfit}{\encodingdefault}{\sfdefault}{m}{sl}
\SetMathAlphabet{\mathsfit}{bold}{\encodingdefault}{\sfdefault}{bx}{n}













\setlength{\paperheight}{11in}
\setlength{\paperwidth}{8.5in}
\usepackage{ltablex}
\usepackage{hyperref}
\usepackage{pdflscape}
\usepackage{multirow}
\usepackage{adjustbox}
\usepackage{rotating}
\usepackage{afterpage}
\usepackage{chronology}

\usepackage{float}
\usepackage{booktabs,tabularx,subcaption}

\usepackage[separate-uncertainty=true]{siunitx}
\sisetup{detect-weight=true, detect-inline-weight=math}

\usepackage{pdflscape}
\usepackage{booktabs}
\usepackage{lscape}
\usepackage{url}
\usepackage{longtable}
\usepackage{tabularx}
\usepackage{ltablex}
\usepackage{float}
\usepackage{graphicx}
\usepackage{subcaption}
\usepackage{tabularx}
\usepackage{caption}
\usepackage{booktabs}
\usepackage{graphicx}
\usepackage{caption}
\usepackage{mdframed}
\usepackage{algpseudocode}
\usepackage{amsmath}
\usepackage{lipsum}
\usepackage{mdframed}
\usepackage{amsmath,amsthm,amssymb}
\usepackage{amsmath}
\usepackage{tikz}
\usepackage{mathdots}
\usepackage{yhmath}
\usepackage{cancel}
\usepackage{color}
\usepackage{array}
\usepackage{multirow}
\usepackage{amssymb}
\usepackage{tabularx}
\usepackage{extarrows}
\usepackage{booktabs}
\usetikzlibrary{fadings}
\usetikzlibrary{patterns}
\usetikzlibrary{shadows.blur}
\usetikzlibrary{shapes}
\usepackage[ruled,vlined]{algorithm2e}
\usepackage{calc}
\usepackage{comment}

\newlength\mylength
\setlength\mylength{\textwidth-2\marginparwidth-2\marginparsep-4\arrayrulewidth-8\tabcolsep}

\title{kNNSampler: Stochastic Imputations for Recovering Missing Value Distributions}

\author{\name Parastoo Pashmchi \email parastoo.pashmchi@sap.com \\
      \addr SAP Labs France E-Mobility Research\\
      EURECOM, Sophia Antipolis, France
      \AND
      \name Jérôme Benoit \email jerome.benoit@sap.com\\
      \addr SAP Labs France E-Mobility Research
      \AND
      \name Motonobu Kanagawa \email motonobu.kanagawa@eurecom.fr \\
      \addr EURECOM, Sophia Antipolis, France
      }

\newcommand{\cH}{\mathcal{H}}
\newcommand{\cD}{\mathcal{D}}
\newcommand{\cX}{\mathcal{X}}
\newcommand{\cY}{\mathcal{Y}}
\newcommand{\cB}{\mathcal{B}}
\newcommand{\cV}{\mathcal{V}}
\newcommand{\miss}{{\rm miss}}
\newcommand{\imp}{{\rm imp}}
\newcommand{\NN}{{\rm NN}}
\newcommand{\ty}{\tilde{y}}
\newcommand{\tx}{\tilde{x}}

\newtheorem{theorem}{Theorem}

\newtheorem{assumption}{Assumption}

\newtheorem{lemma}{Lemma}
\newtheorem{remark}{Remark}

\begin{document}

\maketitle

\begin{abstract}

We study a missing-value imputation method, termed kNNSampler, that imputes a given unit's missing response by randomly sampling from the observed responses of the $k$ most similar units to the given unit in terms of the observed covariates. This method can sample unknown missing values from their distributions, quantify the uncertainties of missing values, and be readily used for multiple imputation.
Unlike popular kNNImputer, which estimates the conditional mean of a missing response given an observed covariate, kNNSampler is theoretically shown to estimate the conditional distribution of a missing response given an observed covariate.
Experiments illustrate the performance of kNNSampler. 
The code for kNNSampler is made publicly available.\footnote{\url{https://github.com/SAP/knn-sampler}}

\textbf{Keywords: missing values imputation, k nearest neighbours, conditional distribution, kernel mean embedding}
\end{abstract}

\section{Introduction}

Missing values occur in real-world datasets for various reasons, such as non-response in surveys and sensor failures. Imputation — filling in missing values with their estimates — is a common preprocessing step used to address missing data. Over the decades, various imputation methods have been proposed, ranging from simple statistical techniques to machine learning algorithms~\citep[e.g.,][] {rubin1976inference,schafer1997analysis,schafer2002missing,little2002statistical,mattei2019miwae,enders2022applied}.

kNNImputer \citep{troyanskaya2001missing} is one of the most widely used imputation methods, owing to its simplicity and availability in popular software packages such as scikit-learn\footnote{\url{https://scikit-learn.org/stable/modules/generated/sklearn.impute.KNNImputer.html}} \citep{scikit-learn}.
It imputes a missing response variable (e.g., customer satisfaction level) of a given unit (e.g., a customer) as the average of the observed responses of the $k$ most similar units to the given unit in terms of observed covariates (e.g., age, gender, occupation). This is to predict the missing response by $k$ nearest neighbours (kNN) regression~\citep{stone1977consistent} so the imputation is an estimate of the conditional {\em expectation} of the missing response given a covariate. The method has been widely used in science and engineering, and many extensions have been proposed~\citep[e.g.,][]{garcia2009k,tutz2015improved,de2016missing,huang2017cross,faisal2021multiple}.

\begin{figure}
    \centering
    \includegraphics[width=0.45\textwidth]{./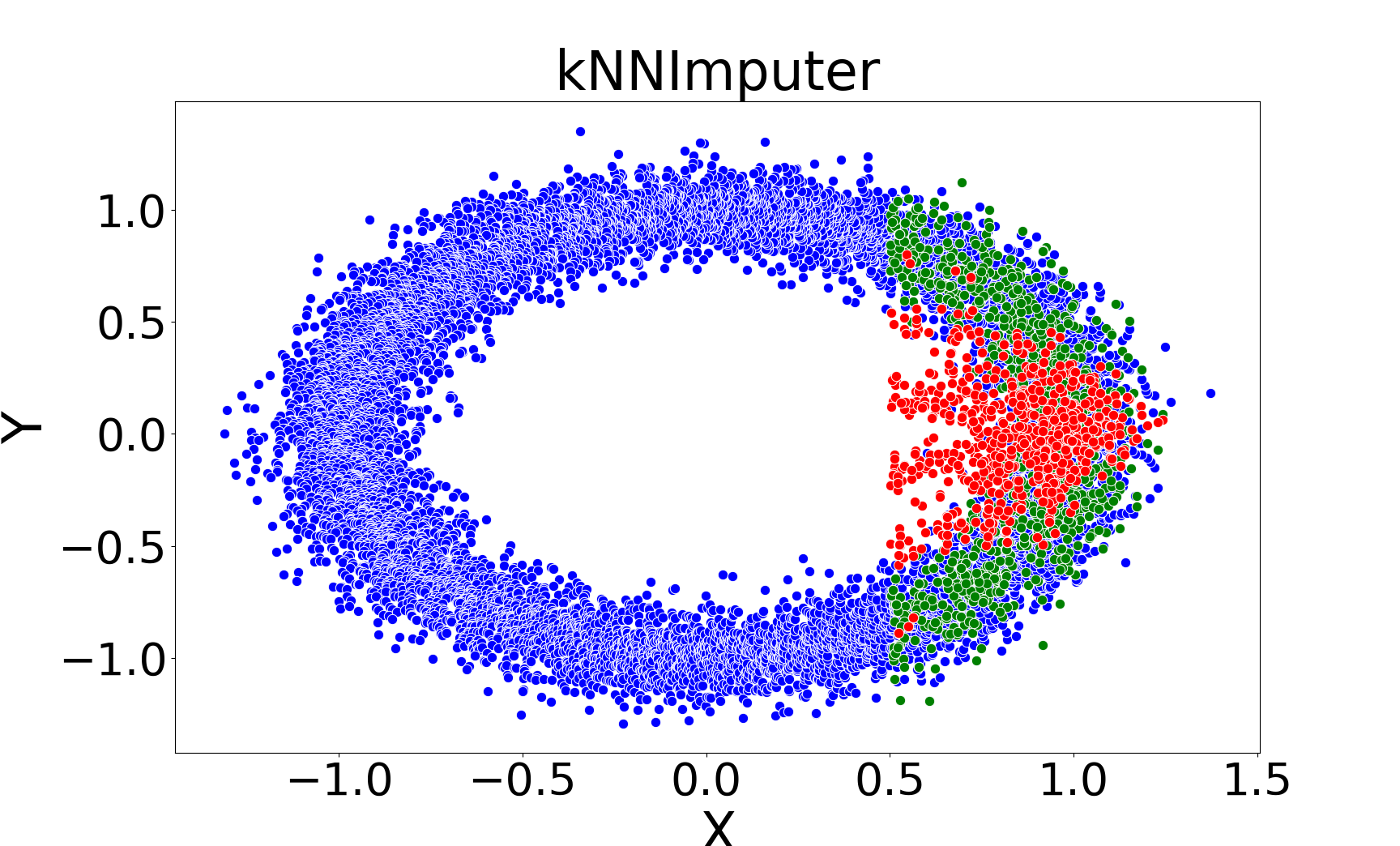}
\includegraphics[width=0.45\textwidth]{./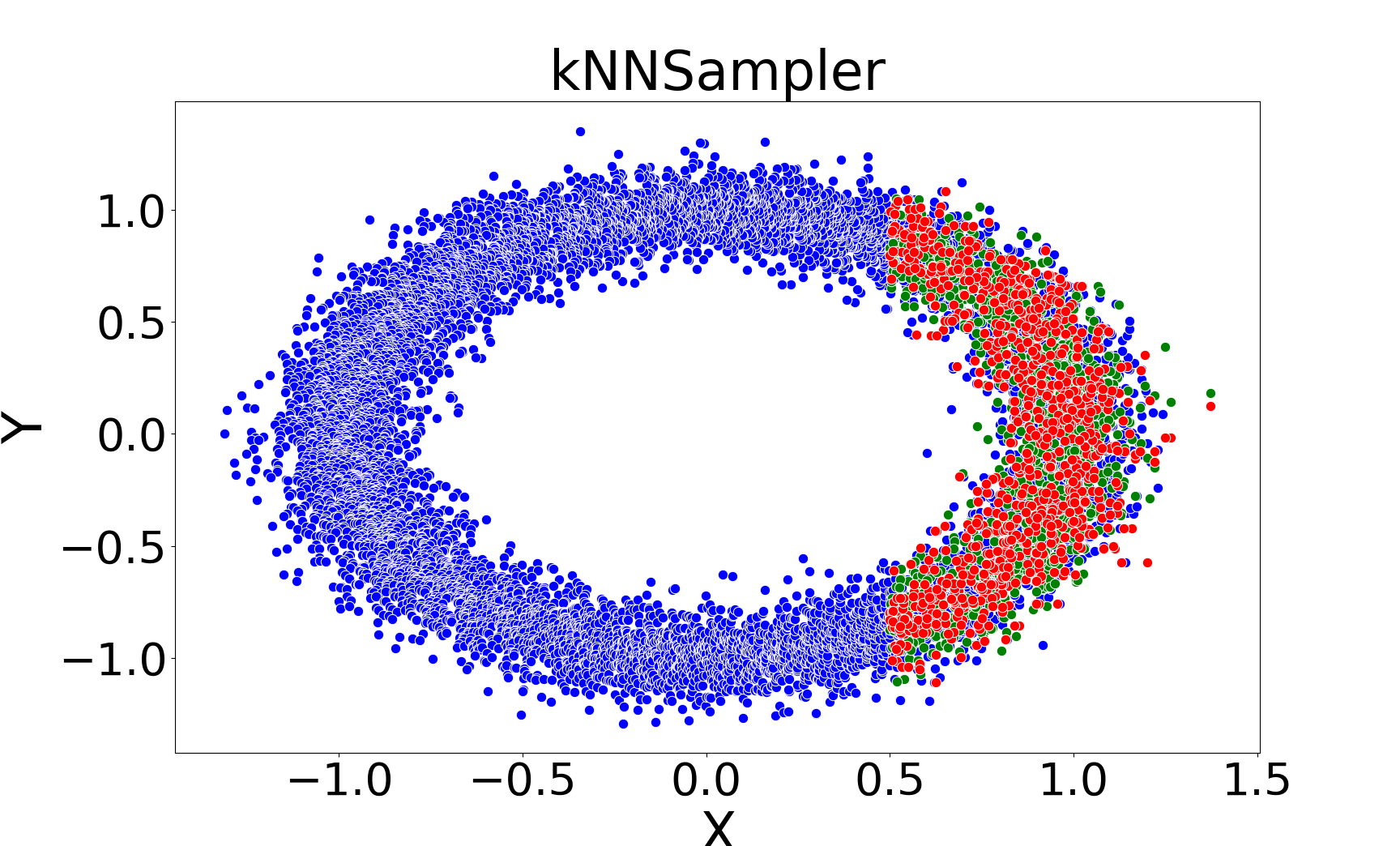}
    \caption{Comparison of imputations by kNNImputer (left) and kNNSampler (right). In each figure, $x$ and $y$ are the covariate and response, respectively. Blue points are observed covariate-response pairs, green points are true missing values and red points are imputed values. For details, see Section~\ref{sec:simulation}.}
    \label{fig:ring-demo-intro}
\end{figure}

\begin{figure}[t]
    \centering
    \includegraphics[width=0.9\linewidth]{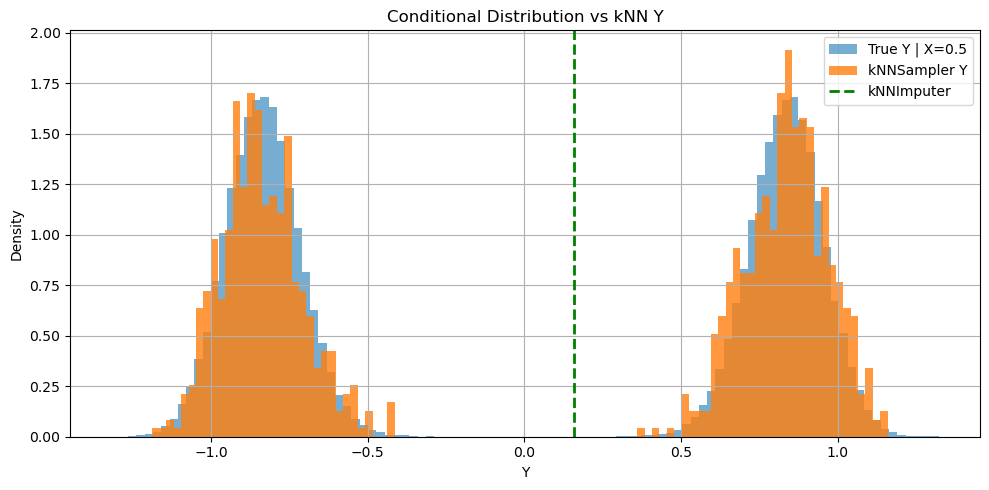}
    \caption{Comparison of the samples of the true conditional distribution $P(y|x)$ of missing response $y$ of a unit with covariate $x = 0.5$ (blue) and the kNN conditional distribution $\hat{P}(y|x)$ with $k = 1,000$ (orange) on the noisy ring data in Figure~\ref{fig:ring-demo-intro} with sample size $10,000$. The imputations by kNNImputer with $k = 5$ are shown as the green dotted vertical line. }
    \label{fig:demo-intro-cond}
\end{figure}

An issue of kNNImputer, shared by other regression-based imputers, is that the distribution of imputations can be significantly different from the distribution of true (hidden) missing values.  This is because, as mentioned, an imputation of kNNImputer is an estimate of the conditional expectation of a missing response, thus tending to be a deterministic function of the covariate. As a result, the distribution of imputed responses is concentrated around the regression curve, even when the distribution of missing responses has large variability. This is illustrated in Figures~\ref{fig:ring-demo-intro} and \ref{fig:demo-intro-cond}, where the true conditional distribution of a missing response is bimodal when the covariate is small,  but the distribution of imputations is unimodal and many imputations take values never realized by the true missing values.   A substantial bias can occur in an analysis of such a distorted imputed dataset, for example, when estimating the variance, quantiles and modes in the population.
(See Sections 2.1 and 2.2 for more formal discussions.)

The above issue of kNNImputer may be addressed by estimating the conditional {\em distribution} of a missing response given a covariate, and randomly sampling imputations from it.  This idea was investigated by \citet{lalande2023numerical}, who proposed the ``kNN$\times$KDE'' approach that combines a soft version of kNN and kernel density estimation (KDE). For a given unit, the conditional density of a missing response is estimated as a weighted average of Gaussian densities centered at observed responses, where the weights are computed so that units more similar, in terms of covariates, to the given unit receive larger weights. kNN$\times$KDE was demonstrated to have good empirical performance in recovering the distribution of missing values, compared to established imputation methods, including kNNImputer, missForest~\citep{stekhoven2012missforest}, SoftImpute~\citep{hastie2015matrix}, and Gain~\citep{yoon2018gain}.  However, no theoretical guarantee exists for kNN$\times$KDE, such as its statistical consistency, i.e., whether the estimated conditional density converges to the true one as the sample size increases. Consistency is not only important as a minimal theoretical guarantee but also in understanding how hyperparameters should be chosen. While kNN$\times$KDE has two main hyperparameters (the ``inverse temperature'' in the softmax function used for weight computations, and the variance of Gaussian densities), no systematic selection procedure was proposed.

This paper studies a simpler kNN-based stochastic imputation method named {\em kNNSampler}. For a given unit whose response is missing, it estimates the conditional distribution of the missing response given the unit's observed covariate as the {\em empirical distribution} of the observed responses of the $k$ most similar units to that unit in terms of covariates; an imputation is randomly sampled from this empirical distribution, which we call {\em kNN conditional distribution}.  kNNSampler is as simple as kNNImputer: instead of taking the mean of the observed responses of $k$ nearest neighbours, kNNSampler simply samples one of those $k$ observed responses.  It is thus simpler than kNN$\times$KDE as it does not involve an intermediate step of density estimation and is free of any hyperparameter for responses.
The number $k$ of nearest neighbours in kNNSampler can be efficiently chosen by leave-one-out cross validation using the fast computation method recently proposed by \citet{kanagawa2024fast}.
Figures~\ref{fig:ring-demo-intro} and \ref{fig:demo-intro-cond} describe imputations by kNNSampler, which align much better with the distribution of true missing values than imputations by kNNImputer.
More systematic experiments are provided in Section~\ref{sec:simulation}.

kNNSampler can be interpreted as an instance of {\em hot deck}, classic imputation methods widely used in practice for socio-economic and public health surveys, including the U.S.~Census Bureau’s Current Population Survey and the National Center for
Education Statistics~\citep[e.g.,][]{andridge2010review}. In a hot deck method, a missing value of a given unit is imputed as one of the response values of the units belonging to the same ``adjustment cell'' as the given unit.  The method is called {\em random hot deck} if the imputation is selected randomly from the adjustment cell; it is called {\em nearest-neighbour hot deck} if nearest neighbours define the adjustment cell  \citep[Example 4.9]{little2002statistical}. kNNSampler is thus essentially a nearest-neighbour random hot deck method.
However, while classic and widely used, hot deck methods have not been well established theoretically \citep{andridge2010review}.

Our contribution is to establish kNNSampler, and thus the nearest-neighbour random hot deck, as a theoretically principled missing-value imputation method. To this end, we analyze the kNN conditional distribution, i.e., the empirical distribution of $k$ nearest neighbour responses from which an imputation is sampled, as an estimator of the true conditional distribution of a missing response given a covariate (Section~\ref{sec:theory}). Our theoretical contributions are summarized as follows.

\begin{itemize}
    \item We derive an error bound between the kNN and true conditional distributions for any given, fixed covariate, in terms of the number $n$ of observed response-covariate pairs, the number $k$ of the nearest neighbours, and other problem-specific constants. The error is measured by the maximum mean discrepancy (MMD)~\citep{gretton2012kernel}, a distance metric on probability distributions that metrizes the weak convergence~\citep{simon2023metrizing}, between the kNN and true conditional distributions. It holds under a Lipschitz condition that the response's conditional distribution changes smoothly when the covariate changes continuously.  A consequence of the bound is the statistical consistency of the kNN conditional distribution, in that the error decreases to zero as the sample size~$n$ goes to infinity, if the number~$k$ of nearest neighbours increases to infinity at a rate slower than~$n$. This offers a theoretical foundation of the kNNSampler and thus the nearest-neighbour random hot deck.

    \item To derive the bound, we analyze the mean embedding of the kNN conditional distribution in a reproducing kernel Hilbert space (RKHS) as a novel estimator of the mean embedding of the true conditional distribution, known as {\em conditional mean embedding}~\citep[Chapter 4]{muandet2017kernel}, which is the RKHS-valued regression function~\citep{grunewalder2012conditional}. The RKHS distance between these two embeddings is equivalent to the MMD between the kNN and true conditional distributions.  Our bound leads to the consistency and convergence rates for the novel kNN-based estimator of the conditional mean embedding.

    \item Our analysis extends the error analysis by \citet{kpotufe2011k} on {\em real}-valued kNN regression to RKHS-valued regression in which the response variable is infinite-dimensional. As a byproduct, we prove that the required sample size to attain a given level of precision increases exponentially {\em not} with the covariate's ambient dimension but with the {\em intrinsic dimension} of the covariate distribution. Therefore, the kNNSampler may not be severely affected by the curse of dimensionality if the covariate distribution has a low intrinsic dimension.

\end{itemize}

This paper is organised as follows.
We describe the proposed approach in Section~\ref{sec:knnSampler} and its theory in Section~\ref{sec:theory}.
We report experimental results on synthetic data in
Section~\ref{sec:simulation} and on real solar-power data in  Section~\ref{sec:real_data}.

\section{Proposed Approach}
\label{sec:knnSampler}

This section describes the proposed approach.
Section~\ref{sec:setting} introduces the setting.
Section~\ref{sec:kNNImputer} explains the kNNImputer and its issue as a preliminary.
We describe kNNSampler in
Section~\ref{sec:kNNSampler-algo}, uncertainty quantification with the kNN conditional distribution in Section~\ref{sec:UQ-kNN}, and multiple imputation with kNNSampler in Section~\ref{sec:MI-kNN}.

\subsection{Setting}
\label{sec:setting}

We first describe the problem setup.
Let $\cX$ and $\cY$ be measurable spaces representing the covariate space and the response space, respectively.
For example, the covariate space may be the $d$-dimensional Euclidean space, $\cX = \mathbb{R}^d$, in which case a covariate $x \in \cX$ consists of $d$ features (e.g., a person's age, weight, height), and the response space may be the real line $\cY = \mathbb{R}$, in which case a response $y \in \cY$ is real-valued (e.g., the person's blood pressure).

We assume that our dataset consists of $n + m$ units (e.g., persons), where $n$ units have both covariate $x_i \in \cX$ and response $y_i \in \cY$ observed, while $m$ units have only covariate $\tilde{x}_j \in \cX$ observed and response $\tilde{y}_{\miss, j} \in \cY$ missing:
\begin{align}
     \cD_n := \{(x_1, y_1), \dots, (x_n, y_n)\}, \quad \cD_\miss := \{
     ( \tilde{x}_1, \tilde{y}_{1, {\rm miss}}), \dots, (\tilde{x}_{m}, \tilde{y}_{m, {\rm miss}}) \}   \label{eq:missing-dataset}
\end{align}

For each of the $n$ units with observed responses, we assume that the covariate follows a marginal distribution $P(x)$ and the response given the covariate follows the conditional distribution $P(y|x)$ in an independently and identically distributed (i.i.d.) manner:
\begin{equation} \label{eq:observed-pairs-dist}
    (x_1, y_1), \dots, (x_n, y_n) \stackrel{i.i.d.}{\sim} P(y|x)P(x)
    \end{equation}

On the other hand, for the $m$ units with missing responses, the covariate is assumed to follow a marginal distribution $Q(\tilde{x})$, which can be different from $P(x)$, while the conditional distribution of the missing response given the covariate remains the same:
\begin{equation} \label{eq:missing-pairs-joint-dist}
     ( \tilde{x}_1, \tilde{y}_{1, {\rm miss}}), \dots, (\tilde{x}_{m}, \tilde{y}_{m, {\rm miss}}) \stackrel{i.i.d.}{\sim} P(\tilde{y}_\miss|\tilde{x})Q(\tilde{x}).
\end{equation}
This assumption implies that the probability of a unit missing its response is determined by the unit's covariate and is not affected by the response.
Therefore, it is an instance of the {\em Missing-At-Random (MAR)} assumption  \citep{rubin1976inference}.
In the special case where the covariate distributions for the two cases are the same, $Q(\tilde{x}) = P(\tilde{x})$, the assumption can be interpreted as the {\em Missing-Completely-At-Random (MCAR)} assumption where missingness occurs completely randomly.

Under this setup, missing responses may be imputed by estimating the unknown conditional distribution $P(y|x)$ of a response given a covariate and sampling from it.
This is what the kNNSampler does.

\begin{remark} \label{rem:Bayesian-justification}
Sampling imputations from the conditional distribution is needed for unbiased estimation of a quantity of interest and its well-calibrated uncertainty quantification. 
We informally describe this from the Bayesian perspective of \citet{rubin1987multiple}, where the quantity of interest, denoted here by $\theta$,  is treated as a random variable.  For the frequentist perspective, see \citet{rubin1987multiple,Rubin96,murray2018multiple}.
A Bayesian analysis is done by computing the posterior distribution of $\theta$ given the observed data in (\ref{eq:missing-dataset}):  
\begin{align*}
    & P( \theta \mid \cD_n,~ \tx_1, \dots,\tx_m ) \\
& = \int \underbrace{P( \theta \mid \cD_n,~ (\tx_1, \ty_1) \dots, (\tx_m, \ty_m) )}_{(A)} \cdot \underbrace{ P( \tilde{y}_1, \dots, \tilde{y}_m \mid \cD_n,~ \tx_1, \dots,\tx_m ) }_{(B)} d\ty_1 \cdots d\ty_m,
\end{align*}
where  $(A)$ is the posterior distribution of $\theta$ given observed dataset $\cD_n = \{ (x_1, y_1), \dots, (x_n, y_n) \}$ and imputed dataset $(\tx_1, \ty_1), \dots, (\tx_m, \ty_m)$ and is computed by a standard Bayesian analysis, treating the imputations as observed data; $(B)$ is the conditional distribution of missing responses $\ty_1, \dots, \ty_m$ given $\cD_n$ and observed covariates $\tilde{x}_1, \dots, \tilde{x}_m$ for missing responses,  and can be written as 
\begin{align*}
    P( \tilde{y}_1, \dots, \tilde{y}_m \mid \cD_n,~ \tx_1, \dots,\tx_m )  = \prod_{i=1}^m P(\ty_i |  \tx_i),
\end{align*}
where we used (\ref{eq:missing-pairs-joint-dist}). 
Thus, by estimating the conditional distribution $P(\tilde{y} | \tx)$ of a response given a covariate and sampling from it, the posterior distribution of $\theta$ can be approximately computed, using, e.g.,  the multiple imputation approach~\citep{rubin1987multiple,Rubin96,murray2018multiple}.  
\end{remark}

\subsection{Issue with kNNImputer and Regression-based Imputers}
\label{sec:kNNImputer}

Before describing the proposed kNNSampler, we discuss an issue with the widely used kNNImputer~\citep{troyanskaya2001missing} and other regression-based imputation methods.

Suppose that the covariate space $\cX$ is equipped with a distance metric $d_{\cX}(x,x')$ that quantifies the distance between any two points $x, x' \in \cX$.
For example, if $\cX$ is the Euclidean space, then $d_{\cX}(x,x')$ may be the Euclidean distance between two vectors $x$ and $x'$.
Let $X_n$ be the set of covariates for the $n$ units with observed responses:
$$
X_n := \{ x_1, \dots, x_n \}
$$
For a given covariate $\tilde{x}$ and a number $k$ of nearest neighbours, let  $\NN(\tilde{x}, k, X_n)$ be the indices of the $k$ units whose covariates are the most similar to $\tilde{x}$ in terms of the distance metric among the $n$ units with observed responses:\footnote{If there is a tie in the distances $d_\cX(\tilde{x}, x_i)$, break it randomly.}
\begin{align} \label{eq:nearest-neighbours}
\NN(\tilde{x}, k, X_n) :=  \{j_1, \ldots, j_k \in \{1, \ldots, n\} \mid & ~~
 d_\cX(\tilde{x}, x_{j_1}) \le \ldots \le d_\cX(\tilde{x}, x_{j_k})\\
 &\le d_\cX(\tilde{x}, x_j)
  \text{ for all } j \in \{1, \ldots, n\} \setminus \{j_1, \ldots, j_k\} \}. \nonumber
\end{align}
That is, $\NN(\tilde{x}, k, X_n)$ is the indices of the $k$ nearest neighbours of $\tilde{x}$ in $X_n$.

kNNImputer~\citep{troyanskaya2001missing} imputes the missing response~$\tilde{y}_{i, \miss}$ of the unit with observed covariate $\tilde{x}_i$ as the average of the observed responses $y_{j_1}, \dots, y_{j_k}$ of its $k$-nearest neighbors $x_{j_1}, \dots, x_{j_k}$:
\[
\hat{y}_{i, \imp} = \frac{1}{k} \sum_{j \in {\rm NN}(\tilde{x}_i, k, X_n)} y_j.
\]
This is kNN regression~\citep[e.g.,][]{gyorfi2002distribution} and thus estimates the conditional {\em mean} of the missing response $\tilde{y}_{i, \miss}$ given the observed covariate $\tilde{x}_i$:
$$
\hat{y}_{i, \imp} \approx f(\tilde{x}_i) := \int \tilde{y} ~dP(\tilde{y} | \tilde{x_i}),
$$
where $f: \cX \to \cY$ is the regression function.
In this case, the observed covariate and the imputed response $(\tilde{x}_i, \hat{y}_{i, \imp} )$ approximately follow the degenerate joint distribution
$$
\delta(\tilde{y}-f(\tilde{x}))Q(\tilde{x}),
$$ where $\delta(\tilde{y}-f(\tilde{x}))$ denotes the Dirac distribution at the conditional mean $f(\tilde{x})$, i.e., the degenerate distribution whose mass concentrates at $f(\tilde{x})$.
This differs from the joint distribution of the observed covariate and the true missing response $(\tilde{x}_i, \tilde{y}_{i, \miss})$:
\begin{equation} \label{eq:true-dist-missing}
P(\tilde{y} \mid \tilde{x})Q(\tilde{x})
\end{equation}
unless the conditional distribution $P(\tilde{y} \mid \tilde{x})$ is the Dirac distribution $\delta(\tilde{y}-f(\tilde{x}))$, i.e., unless the missing response is the deterministic function of observed covariate.
The same issue occurs with other single imputation methods based on regression, because they impute the missing response by estimating the conditional mean.

To summarize, kNNImputer and other regression-based imputation methods do not generally recover the true distribution of the missing data.
An analysis based on the imputed dataset may lead to a biased result.
For instance, the variance of the imputed values may be much lower than the variance of the true missing values.
kNNSampler alleviates this issue by imputing missing values by estimating the conditional distribution $P(\tilde{y}\mid \tilde{x})$.

\begin{remark}
Consider the Bayesian analysis in Remark~\ref{rem:Bayesian-justification}.
If the missing responses are imputed by deterministic, regression estimates $\hat{y}_{1, \imp}, \dots, \hat{y}_{m, \imp}$, the posterior distribution becomes that of the quantity of interest $\theta$ given observed and imputed datasets, {\bf both treated as observed}:
\begin{align*}
  P( \theta \mid \cD_n,~ (\tx_1, \hat{y}_{1, \imp}) \dots, (\tx_m, \hat{y}_{m, \imp}) ) 
\end{align*}
This ignores uncertainties in the missing responses, leading to final uncertainty estimates for $\theta$ that are overconfident (a prediction interval may be much narrower than the actual error of a point estimate).
\end{remark}

\subsection{kNNSampler}
\label{sec:kNNSampler-algo}

\begin{algorithm}[t]
\caption{kNNSampler}
\SetAlgoLined
\DontPrintSemicolon
\KwIn{Number of nearest neighbors $k$, observed covariates $\tilde{x}_1, \dots, \tilde{x}_m \in \cX$ with missing responses, observed covariate-response pairs $(x_1, y_1), \dots, (x_n, y_n) \in \mathcal{X}  \times \mathcal{Y}$.}
\KwOut{Imputed responses $\hat{y}_{1, \imp}, \dots, \hat{y}_{m, \imp} \in \cY$.}

\BlankLine

\For{$i=1$ \KwTo $m$}{
 $\hat{y}_{i, \imp} := y_{j}$, where $j \in \{1, \dots, n\}$ is uniformly sampled from $\NN(\tilde{x}_i, k, X_n)$   in \eqref{eq:nearest-neighbours},  the indices of the $k$-nearest neighbors of $\tilde{x}_i$ in $X_n = \{x_1, \dots, x_n \}$. \;
}

\label{algo1:knnsampler-basic}
\end{algorithm}

We now describe kNNSampler (Algorithm~\ref{algo1:knnsampler-basic}).
Consider imputing the missing response $\tilde{y}_{\miss}$ of a unit with observed covariate $\tilde{x}$.
kNNSampler estimates the conditional distribution $P(\tilde{y}_{\miss} \mid \tilde{x})$ of $\tilde{y}_{\miss}$ given $\tilde{x}$ as the empirical distribution of the observed responses $y_{j_1}, \dots, y_{j_k}$ of the $k$ nearest neighbours $x_{j_1}, \dots, x_{j_k}$ of  $\tilde{x}$:
\begin{equation} \label{eq:cond-dist-est}
P(\tilde{y}_{\miss} \mid \tilde{x}) \approx \hat{P}(\tilde{y}_{\miss} \mid \tilde{x}) := \frac{1}{k} \sum_{j \in {\rm NN}(\tilde{x}, k, X_n)}  \delta(\tilde{y}_{\miss} - y_j),
\end{equation}
which is the discrete distribution where each of $y_{j_1}, \dots, y_{j_k}$ has probability mass $1/k$.
An imputation $\hat{y}_{\imp}$ for the missing response is randomly sampled from this empirical distribution:
$$
\hat{y}_{\imp} \sim \hat{P}(\tilde{y}_{\miss} \mid \tilde{x}).
$$
Algorithmically, this is to randomly sample one of the kNN observed responses $y_{j_1}, \dots, y_{j_k}$.
Algorithm~\ref{algo1:knnsampler-basic} independently applies this procedure to the observed covariate $\tilde{x}_i$ to generate an imputation $\hat{y}_{i, \imp}$ of missing value $y_{i, \miss}$ for each unit $i = 1, \dots, m$.

\paragraph{Choice of $k$}
The number of nearest neighbors $k$ is a hyperparameter of kNNSampler.
The theoretical and empirical results below indicate that $k$ should not be fixed to a prespecified value (e.g., $k=5$), and should be chosen depending on the available data.
One way is to perform cross-validation for kNN regression on the data $(x_1, y_1), \dots, (x_n, y_n)$ and select $k$  among candidate values that minimizes the mean-square error on held-out observed responses, averaged over different training-validation splits.
In particular, the present work uses Leave-One-Out Cross-Validation (LOOCV) using the fast computation method recently proposed by \citet{kanagawa2024fast}.

\subsection{Uncertainty Quantification of Missing Values}
\label{sec:UQ-kNN}

Quantifying the uncertainty in missing values is important for several reasons, including assessing the reliability of imputations and the adequacy of the covariates used, as well as determining how to perform imputations (e.g., single or multiple) and how to use the imputations in subsequent analyses.
We describe here how to perform
uncertainty quantification of missing values with the kNN conditional distribution.

\paragraph{Conditional Probability Estimation}
kNNSampler can be used to estimate the conditional probability of a missing response $\tilde{y}_\miss$ belonging to a specified (measurable) subset $S$ of the response space $\cY$, given observed covariate $\tilde{x}$:
$$
{\rm Pr}( \tilde{y}_\miss \in S \mid \tilde{x}) = \int  ~\mathbb{I}[\ty \in S]~ dP( \ty \mid \tilde{x}),
$$
where $\mathbb{I}[\ty \in S]$ is the indicator function that outputs $1$ if $\ty \in S$ and $0$ otherwise.
By replacing the unknown conditional distribution $P(\ty ~|~ \tx)$ by the kNN conditional distribution $\hat{P}(\ty ~|~ \tx)$ in (\ref{eq:cond-dist-est}), this conditional probability is approximated as
$$
\widehat{{\rm Pr}}( \tilde{y}_\miss \in S \mid \tilde{x}) = \int  ~\mathbb{I}[\ty \in S]~ d\hat{P}( \ty \mid \tilde{x}) = \frac{1}{k} \sum_{j \in {\rm NN}(\tilde{x}, k, X_n)} ~\mathbb{I} [y_j \in S].
$$
In other words, the conditional probability is estimated as the observed frequency of the kNN response values that fall in $S$.

\paragraph{Interval Estimation}
Let us focus on a real-valued missing response  $\ty_{\miss} \in \mathcal{Y} = \mathbb{R}$. The conditional probability of the missing response belonging to a given (finite or infinite) interval $S = (\ell,u)$, where $\ell< u$, is estimated as the observed frequency of the k-NN responses belonging to that interval.  This indicates that an interval to which the kNN responses belongs at a specified frequency $0< 1-\alpha < 1$ (e.g., $\alpha = 0.05$, in which case the 95\% of the kNN responses belong to the interval) is an estimate of an interval to which the unknown missing response belongs at that probability $1 - \alpha$.

Such an interval $(\ell, u)$ is constructed by defining its lower bound $\ell$ and upper bound $u$ as, respectively, the lower and upper $\alpha/2$ empirical quantiles of the kNN responses, i.e., the $k \alpha/2$-smallest and the $k \alpha/2$-largest kNN responses (e.g., if $k = 200$ and $\alpha = 0.05$, the 5th smallest and the 5th largest kNN responses):
$$
{\rm Pr}( \ell <  \ty_{\miss}  < u \mid \tx) \approx  1 - \alpha
$$

\paragraph{Conditional Standard Deviation Estimation}
The conditional standard deviation of a missing response given observed covariate quantifies the variability of the missing response.
This can be estimated by the empirical standard deviation of the kNN response values for the observed covariate.

\subsection{Multiple Imputation with kNNSampler}
\label{sec:MI-kNN}

kNNSampler can be used for multiple imputation by independently generating multiple imputed datasets.
More precisely, let $B$ be the number of multiple imputed datasets to be generated (e.g., $B = 10$).
For each $b = 1, \dots, B$, kNNSampler is independently applied to impute the missing responses in the dataset $\cD_\miss$ (\ref{eq:missing-dataset}) to create an imputed dataset
$$
\cD_{n+m}^{(b)} := \cD_n \cup \cD_\imp^{(b)} \quad \text{where} \quad \cD_\imp^{(b)} :=  \{ ( \tilde{x}_1, \tilde{y}_{1, \imp}^{(b)}), \dots, (\tilde{x}_{m}, \tilde{y}_{m,  \imp}^{(b)}) \},
$$
where $\tilde{y}_{i,  \imp}^{(b)}$ is an imputation for the $i$-th unit with a missing response $\tilde{y}_{i,\miss}$  covariates $\tilde{x}_i$.
This results in $B$ imputed datasets:
$$
\cD_{n+m}^{(1)} , \dots,  \cD_{n+m}^{(B)}.
$$
An analysis can then be made based on the standard procedure of multiple imputation~\citep{rubin1987multiple}.

For example, suppose that we want to estimate a population quantity $\theta^*$ (e.g., the mean customer satisfaction level of a population).
Let $S_{n+m}$ be a function of a dataset of size $n+m$ that outputs an estimate $\hat{\theta}_{n+m}$ of the unknown $\theta^*$ (e.g., the empirical average of $n+m$ values):  $\hat{\theta}_{n+m} = S(\cD_{n+m})$.
Apply this function to each of the $B$ imputed datasets, one obtains $B$ estimates of $\theta^*$:
$$
\hat{\theta}^{(b)}_{n+m} = S(\cD_{n+m}^{(b)}), \quad b = 1, \dots, B.
$$
The empirical average of these $B$ estimates gives a multiple-imputation estimate of $\theta^*$.
The empirical standard deviation of the $B$ estimates $\hat{\theta}_{n+m}^{(1)}, \dots, \hat{\theta}_{n+m}^{(B)}$ quantifies the uncertainty due to the missingness in the original data.
Combined with the standard error of each $\hat{\theta}_{n+m}^{(b)}$, this standard deviation can be used to quantify the overall uncertainty of the estimate using Rubin's rule.

\section{Theory}
\label{sec:theory}

We describe a theory for kNNSampler's conditional distribution (\ref{eq:cond-dist-est}) as an estimator of the true conditional distribution.
We shall show that, as the number $k$ of nearest neighbors increases at an approximate rate as the increase of the number $n$ of observed covariate-response pairs, the kNN conditional distribution converges to the true one in the Maximum Mean Discrepancy (MMD)~\citep{gretton2012kernel}, which implies the convergence in distribution~\citep[Section 5]{sriperumbudur2010hilbert}.
We prove this by adapting the proof of the convergence rates of real-valued kNN regression by \citet[Theorem 1]{kpotufe2011k} to {\em Hilbert space-valued} kNN regression.\footnote{Hilbert space-valued kNN regression was also analyzed in \citet{lian2011convergence}, but their results are not directly applicable to our case.
This is because \citet{lian2011convergence} assumes that Hilbert space-valued noises are independent of input variables, but this assumption is too strong in our case.}

We use the framework of {\em kernel mean embedding}~\citep{muandet2017kernel} in which every probability distribution is represented as a distinct point in an infinite-dimensional feature space known as a reproducing kernel Hilbert space (RKHS).
The true and kNN conditional distributions are represented as points in an RKHS, and the distance between them, which is the MMD, quantifies the estimation error.
An upper bound on this distance is obtained in terms of the sample size, the number of nearest neighbours, and other relevant quantities.

\subsection{RKHS Embeddings of Conditional Distributions}
Let us first define an RKHS on the response space $\cY$.
As before, $\cY$ is a measurable space such as the $p$-dimensional Euclidean space, $\cY = \mathbb{R}^p$.
A Hilbert space\footnote{A Hilbert space is a vector space in which an inner product is defined, the norm is induced from the inner product, and the limit point of any convergent sequence in this norm belongs to the vector space.} $\cH$ consisting of functions $f$ on $\cY$ is called RKHS if there exists a map
$$
\Phi: \cY \to \cH
$$
 called {\em feature map}, such that the value $f(y)$ of any function $f$ in $\cH$ at any point $y$ in $\cY$ can be written as the inner product between $f$ and the feature map $\Phi(y)$ of $y$:
$$
f \in \cH \quad \Longleftrightarrow \quad f(y) = \left< f, \Phi(y) \right>_{\cH} \ \text{for all }\ y \in \cY,
$$
where $\left< \cdot, \cdot \right>_{\cH}$ denotes the inner product of $\cH$.
The $\Phi(y)$ may be called {\em feature vector} of $y$, and $\cH$ the {\em feature space}, which can be infinite-dimensional.

The inner product between the feature maps $\Phi(y), \Phi(y')$ of any two points $y, y'$ defines the {\em kernel function}
\begin{equation} \label{eq:kernel-inner-prod}
\ell(y, y') := \left< \Phi(y), \Phi(y') \right>_{\cH} \quad \text{for all }\ y, y', \in \cY.
\end{equation}
This is called {\em reproducing kernel} of the RKHS.
The RKHS and the reproducing kernel are one-to-one, so an RKHS can be induced by defining a kernel.
For example, if $\cY = \mathbb{R}^p$, the Gaussian kernel $\ell(y, y') = \exp( - \alpha \| y - y'\|^2 )$ for $\alpha > 0$ is the reproducing kernel of a certain RKHS $\cH$, and there exists an infinite-dimensional feature map $\Phi$ that induces the Gaussian kernel as (\ref{eq:kernel-inner-prod}).
See e.g.~\citet{SteChr2008,kanagawa2025gaussian} for details on RKHSs.

Every probability distribution $P$ on $\cY$ is represented as the expected feature map:
$$
\Phi(P) := \int \Phi(y)dP(y) \in \cH.
$$
This is called {\em mean embedding} of $P$.
If the RKHS $\cH$ is large enough, any two different probability distributions $P$ and $Q$ are mapped to two distinct mean embeddings:
$$
P \not= Q \quad \Longleftrightarrow \quad \Phi(P) \not= \Phi(Q).
$$
In this case, the RKHS is called {\em characteristic} \citep{sriperumbudur2010hilbert}.
For example, Gaussian, Mat\'ern and Laplace kernels induce characteristics RKHSs.

The true and kNN conditional distributions in
(\ref{eq:observed-pairs-dist}) and (\ref{eq:cond-dist-est}) are represented as their mean embeddings:
\begin{align} \label{eq:mean-embeddings-cond-dists}
\Phi(P(\cdot \mid x)) := \int \Phi(y) dP(y|x)  \quad \text{and} \quad \Phi( \hat{P}( \cdot \mid x) ) := \frac{1}{k} \sum_{j \in {\rm NN}(x, k, X_n)}  \Phi(y_j) \quad \text{for all }\ x \in \cX.
\end{align}
Here, the dot ``$\ \cdot \ $'' is used in the notation of the conditional distributions to emphasize that they are probability distributions on $\cY$ and do not depend on a specific value of $y \in \cY$.
The RKHS distance between the two conditional mean embeddings is the MMD between the true and kNN conditional distributions. It is used as an error metric of the kNN conditional distribution and theoretically analyzed in the following.

The mean embedding of the conditional distribution is known as {\em conditional mean embedding}~\citep{song2009hilbert,song2013kernel} and its estimator based on a regularized least-squares algorithm has been studied extensively \citep[e.g.,][]{grunewalder2012conditional,li2022optimal,li2024towards}.
The mean embedding of the kNN conditional distribution in (\ref{eq:mean-embeddings-cond-dists}) is a new estimator of the conditional mean embedding.
Its analysis below is thus a new contribution to the RKHS literature and may be of independent interest.

\subsection{Assumptions}

We describe key assumptions for the analysis, which follow \citet{kpotufe2011k} with appropriate modifications.

The conditional mean embedding in (\ref{eq:mean-embeddings-cond-dists}) is the conditional expectation of the response feature vector $\Phi(y)$ given a covariate $x \in \cX$; thus, it is the RKHS-valued regression function~\citep{grunewalder2012conditional}.
We assume that the map from a covariate $x$ to the conditional mean embedding $\Phi(P(\cdot \mid x))$ is smooth in the sense that it is Lipschitz continuous.

\begin{assumption} \label{as:lipschitz}
There exists a constant $\lambda > 0$ such that the RKHS distance between the conditional mean embeddings for any two inputs $x, x' \in \cX$ is bounded by the distance between $x$ and $x'$ times $\lambda$:
$$
\left\| \Phi(P(\cdot \mid x)) - \Phi(P(\cdot \mid x')) \right\|_\cH \leq \lambda~ d_\cX(x,x') \quad \text{for all }\ x, x' \in \cX,
$$
where $\| \cdot \|_\cH$ is the norm of the RKHS $\cH$.
\end{assumption}

Our next assumption is that the reproducing kernel~(\ref{eq:kernel-inner-prod}) is bounded on $\cY$.
This is a mild assumption satisfied by many commonly used kernels such as Gaussian, Mat\'ern and Laplace kernels.

\begin{assumption} \label{as:bounded}
    There exists a constant $C_{\rm ker} > 0$ that upper-bounds the value of the reproducing kernel~(\ref{eq:kernel-inner-prod}):
    $$
    0 \leq \ell(y, y') \leq C_{\rm ker}^2 \quad \text{for all }\ y, y' \in \cY.
    $$
\end{assumption}
It can be easily shown that this assumption implies that the RKHS distance between the conditional mean embedding and any response's feature vector is bounded:
\begin{equation} \label{eq:bound-output-noise}
    \left\| \Phi(P(\cdot \mid x)) - \Phi(y) \right\|_\cH \leq \sqrt{2} C_{\rm ker} \quad \text{for all }\ x \in \cX \ \text{and }\ y \in \cY.
\end{equation}
This implies that the ``noise'' in the RKHS-valued regression is bounded.

The next assumption is about the {\em intrinsic dimension} of the marginal distribution $P(x)$ on the covariate space, which can be much smaller than the covariate's dimension $p$ if $x \in \mathbb{R}^p$.
The error of the kNN conditional distribution shall be shown to decrease as the sample size increases at a rate depending on the intrinsic dimension, not the covariate's dimension.
Let $B(x,r) \subset \cX$ denote the ball of center $x \in \cX$ and radius $r > 0$:
$$
    B(x,r) := \left\{ x' \in \cX \mid  d_\cX(x, x') \leq r \right\}.
$$

\begin{assumption} \label{as:doubling-dimensions}
      For the marginal distribution $P(x)$ on the covariate space $\cX$, there are positive constants $C_{\rm dist} > 0$, $r_{\rm max} > 0$, and $d > 0$ such that
    $$
    P( B(x,r) ) \leq C_{\rm dist} \epsilon^{-d} P( B(x, \epsilon r) ) \quad \text{for all }\ 0 < r < r_{\rm max} \ \text{and all }\ 0 < \epsilon < 1.
    $$
\end{assumption}

This assumption states that if the radius of a ball is increased by a factor of $\epsilon^{-1}$, the probability mass of the ball increases by at most a factor of $(\epsilon^{-1})^d$.
Therefore, the constant $d$ is interpreted as the intrinsic dimension of the covariate distribution, and can be much lower than the ambient dimension $p$ if $\cX = \mathbb{R}^p$.
For example, if the distribution $P(x)$ is supported on a line in a two-dimensional space, then $d = 1$ while $p = 2$.
If $P(x)$ is supported on a plane in a three-dimensional space, then $d = 2$ and $p = 3$ and so forth.

Lastly, we need the following technical condition.
\begin{assumption} \label{as:VC-dimensions}
    The covariate space $\cX$ is a metric space with distance metric $d_{\cX}$ such that the class of all balls $\mathcal{B} := \left\{ B(x,r) \mid x \in \cX, \ r > 0 \right\}$ has a finite Vapnik–Chervonenkis (VC) dimension $\cV_\cB > 0$.
\end{assumption}

This assumption is satisfied, for example, if $\cX = \mathbb{R}^p$ with $p \geq 1$, in which case $\cV_\cB \leq p + 2$ \citep[e.g.,][Exercise 3.17]{mohri2018foundations}.

\subsection{Error Bounds and Convergence Rates}

Under the above assumptions, the distance between the true and kNN conditional distributions can be upper-bounded as follows.
The proof, provided in Appendix~\ref{sec:proof-main-theorem}, is an adaptation of the proof of \citet[Theorem 1]{kpotufe2011k}, which is an upper error bound on real-valued kNN regression, to our setting of RKHS-valued regression.

\begin{theorem} \label{theo:error-upper-bound}
    Let $(x_1, y_1), \dots, (x_n, y_n) \stackrel{i.i.d.}{\sim} P(y|x)P(x)$ and $\hat{P}(y|x)$ be the kNN conditional distribution ~(\ref{eq:cond-dist-est}) with $k$ nearest neighbours.
    Suppose that Assumptions~\ref{as:lipschitz}, \ref{as:bounded}, \ref{as:doubling-dimensions} and \ref{as:VC-dimensions} hold.
    Let $0 < \delta < 1$.
    Then, with probability at least $1 - 2 \delta$, the bound
    \begin{align} \label{eq:bound-main-theo}
    \left\| \Phi( P(\cdot \mid x)) - \Phi( \hat{P}(\cdot \mid x) ) \right\|_{\cH}^2
    \leq  4 C_{\rm ker}^2 (1 + 4 \left( \cV_\cB \ln (n) - \ln(\delta) \right) \cdot \frac{1}{k} +   2 \lambda^2 r^2 \left( \frac{3 C_{\rm dist}}{P( B(x, r) )} \cdot \frac{k}{n }   \right)^{2/d}
    \end{align}
     holds simultaneously for all $x \in \cX$,  $k \in \{1, \dots, n\}$  and $0 < r < r_{\rm max}$ satisfying
    \begin{align} \label{eq:cond-n-k-rx}
       & k \geq \cV_\cB \ln (2n) + \ln(8/\delta) \quad \text{and} \quad  \frac{ k}{n} <   \frac{ P\left( B(x,r ) \right) }{  3 C_{\rm dist} } .
    \end{align}

\end{theorem}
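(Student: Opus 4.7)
The plan is to perform a bias--variance decomposition of the squared RKHS error and then extend, to the Hilbert-valued setting, the finite-sample analysis \citet{kpotufe2011k} used for real-valued kNN regression. Writing $N := \NN(x, k, X_n)$ and $\mu(x) := \Phi(P(\cdot\mid x))$, I would split
\begin{equation*}
\mu(x) - \Phi(\hat{P}(\cdot\mid x)) \;=\; \underbrace{\Bigl(\mu(x) - \tfrac{1}{k}\sum_{j \in N} \mu(x_j)\Bigr)}_{A(x)} \;+\; \underbrace{\tfrac{1}{k}\sum_{j \in N}\bigl(\mu(x_j) - \Phi(y_j)\bigr)}_{B(x)}
\end{equation*}
and apply $\|A+B\|_\cH^2 \le 2\|A\|_\cH^2 + 2\|B\|_\cH^2$, which already accounts for the leading factors in the two summands of (\ref{eq:bound-main-theo}).

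For the bias $A(x)$, Assumption~\ref{as:lipschitz} and the triangle inequality give $\|A(x)\|_\cH \le \lambda \cdot \max_{j \in N} d_\cX(x, x_j)$, so the task reduces to bounding the $k$-th NN radius $r_k(x)$. For any $r \in (0, r_{\max})$ satisfying the second condition in (\ref{eq:cond-n-k-rx}), I would set $\epsilon := \bigl(3 C_{\rm dist}\, k / (n\, P(B(x,r)))\bigr)^{1/d}$; Assumption~\ref{as:doubling-dimensions} then yields $P(B(x, \epsilon r)) \ge 3k/n$, after which a multiplicative Chernoff bound ensures that at least $k$ of the $x_i$ lie in $B(x,\epsilon r)$, hence $r_k(x) \le \epsilon r$. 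To make this uniform in $x$ I would use a VC-type uniform deviation inequality over the class $\mathcal{B}$ of balls (whose VC dimension is $\cV_\cB$ by Assumption~\ref{as:VC-dimensions}), so that empirical ball probabilities approximate $P$ to relative precision of order $\sqrt{(\cV_\cB \ln n - \ln\delta)/n}$ simultaneously over all balls, with failure probability at most $\delta$. Squaring $\lambda \cdot \epsilon r$ then delivers precisely the second summand of (\ref{eq:bound-main-theo}).

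For the variance $B(x)$, conditional on $X_n$ the summands $Z_j := \mu(x_j) - \Phi(y_j)$ are independent, Bochner mean-zero, and RKHS-norm bounded by $\sqrt{2}\,C_{\rm ker}$ by (\ref{eq:bound-output-noise}). For any fixed subset $S \subseteq \{1,\dots,n\}$ of size $k$, a Hilbert-valued Hoeffding/Pinelis-type bound, combining $\E\|k^{-1}\sum_{j\in S} Z_j\|_\cH^2 \le 2C_{\rm ker}^2/k$ with a bounded-differences concentration of the norm, yields $\|k^{-1}\sum_{j \in S} Z_j\|_\cH^2 \le (2 C_{\rm ker}^2/k)\bigl(1 + 4 \ln(1/\delta')\bigr)$ with probability at least $1 - \delta'$. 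The uniform-in-$x$ upgrade then comes from the observation that every kNN set $\NN(x, k, X_n)$ has the form $B(x, r) \cap X_n$ for some radius, so by Sauer's lemma the number of distinct such intersections as $(x, r)$ vary is at most $O(n^{\cV_\cB})$; choosing $\delta' = \delta / n^{\cV_\cB}$ and union-bounding over these subsets (and, where necessary, also over $k \in \{1, \dots, n\}$) reproduces the $(1 + 4(\cV_\cB \ln n - \ln \delta))/k$ factor in the first summand of (\ref{eq:bound-main-theo}).

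The main obstacle, and the genuinely new aspect compared with \citet{kpotufe2011k}, is the Hilbert-valued concentration for $B(x)$: one must appeal to a vector-valued Hoeffding-type inequality in $\cH$ rather than a scalar one, and then carefully align it with the VC-based uniform-in-$x$ bound used to control $r_k(x)$ on the bias side. Allocating $\delta$ of failure probability to the bias event (the uniform ball-mass estimate) and $\delta$ to the variance event, and combining via a final union bound, then yields the stated conclusion at joint probability at least $1 - 2\delta$, simultaneously for all $x \in \cX$, all admissible $k$, and all $r < r_{\max}$ satisfying (\ref{eq:cond-n-k-rx}).
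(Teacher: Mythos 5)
Your proposal is correct and follows essentially the same route as the paper's proof: the identical bias--variance decomposition, the same control of the $k$-NN radius via the doubling condition combined with a uniform relative VC bound on ball masses, and the same variance argument via bounded-differences (McDiarmid) concentration of the RKHS norm together with the conditional second-moment computation and a union bound over the at most $n^{\cV_\cB}$ distinct nearest-neighbour index sets. The final allocation of $\delta$ to each of the two events to reach $1-2\delta$ also matches the paper.
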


From Theorem~\ref{theo:error-upper-bound}, the following observations can be made.

\paragraph{Consistency.}
Focusing on the dependence on the sample size $n$ and the number $k$ of nearest neighbours, the bound~(\ref{eq:bound-main-theo}) can be written as
\begin{equation} \label{eq:bound-constants-simplified}
\left\| \Phi( P(\cdot \mid x)) - \Phi( \hat{P}(\cdot \mid x) ) \right\|_{\cH}^2
    \leq C_1 \frac{\ln(n)}{k} + C_2 \left( \frac{k}{n} \right)^{2/d},
\end{equation}
where $C_1$ and $C_2$ are constants independent of $n$ and $k$.
The first and second terms correspond to the variance and bias, respectively, of the kNN-based conditional mean embedding estimator $\Phi( \hat{P}(\cdot \mid x) )$.
The overall error decreases to zero as $n$ increases if both the variance and bias decrease to zero; this requires that $k$ increases as $n$ increases so that the variance goes to zero, $\ln(n) / k \to 0$, while $k$ should not decrease ``too fast'' so that the bias also goes to zero, $k / n \to 0$:
\begin{equation} \label{eq:consitency-disp}
\left\| \Phi( P(\cdot \mid x)) - \Phi( \hat{P}(\cdot \mid x) ) \right\|_{\cH} \longrightarrow 0
      \quad \text{as } \ n \to \infty  \ \ \left(\text{with} \ k/n \to 0 \ \text{and} \  \ln(n) / k \to 0 \right).
\end{equation}
On the other hand, if $k$ is fixed to a constant value (e.g., $k = 1$), the variance term does not decrease even if the sample size increases.
These observations are well known for real-valued kNN regression~\citep[e.g.,][]{gyorfi2002distribution}.

\paragraph{Convergence in Distribution.}
The above consistency (\ref{eq:consitency-disp}) implies the {\em convergence in distribution} (or {\em weak convergence}) of the kNN conditional distribution $\hat{P}(\cdot \mid x)$ to the true one $P(\cdot \mid x)$ if the response space $\cY$ is a compact metric space (e.g., $\cY$ is a bounded closed subset of an Euclidean space) and $\cH$ is a universal RKHS\footnote{An RKHS $\cH$ consisting of functions on a metric set $\cY$ is called {\em universal} if any continuous bounded function $f: \cY \to \mathbb{R}$ can be approximated arbitrarily well in terms of the supremum norm by functions in $\cH$.}, such as the RKHSs of  Gaussian, Mat\'ern and Laplace kernels \citep[Theorem 23]{sriperumbudur2010hilbert}; see \citet{simon2023metrizing} for more generic conditions.
That is, under these conditions,
 the expectation of any continuous bounded function $f: \cY \to \mathbb{R}$ under the kNN distribution $\hat{P}(\cdot \mid x)$ converges to the expectation under the true distribution $P(\cdot \mid x)$:
$$
     \int f(y)d\hat{P}(y \mid x)
  \longrightarrow \int f(y)dP(y \mid x)  \quad \text{as } \ n \to \infty  \ \ \left(\text{with} \ k/n \to 0 \ \text{and} \  \ln(n) / k \to 0 \right).
$$
This supports using the approximate conditional distribution in multiple imputation of missing values.

\paragraph{Convergence Rates.}
An asymptotically optimal choice of $k$ that minimizes the bound~(\ref{eq:bound-constants-simplified}), up to the $\ln(n)$ factor, can be obtained by balancing the variance and bias terms.
If we set $k \propto n^{\frac{2}{2+d}}$, we obtain the convergence rate
\begin{equation} \label{eq:convergence-rate}
\left\| \Phi( P(\cdot \mid x)) - \Phi( \hat{P}(\cdot \mid x) ) \right\|_{\cH}^2 \leq C_3 \ln (n) \cdot n^{ - \frac{2}{2+d} },
\end{equation}
where $C_3$ is a constant independent of $n$ and $k$.

The rate~(\ref{eq:convergence-rate}) shows that the required sample size $n$ to attain a desired error level increases exponentially with respect to the intrinsic dimension $d$ of the covariate distribution $P(x)$, not the ambient dimension of the input space $\cX$, which is captured by the VC dimension $\cV_\cB$ of all the balls in $\cX$.
Therefore, even when the covariate's dimension is large, the error can be small if the covariate features have strong correlations so that the intrinsic dimension $d$ is small.
This is the finding first made by \citet{kpotufe2011k} on real-valued kNN regression, and we extend it to RKHS-valued kNN regression.

The rate~(\ref{eq:convergence-rate}) is the same as the minimax optimal rate for estimating a Lipschitz-continuous {\em real}-valued regression function when the covariate distribution $P(x)$ has the intrinsic dimension $d$~\citep[Theorem 2]{kpotufe2011k}.
An interesting point is that the same rate is attained with RKHS-valued kNN regression where the output space is an RKHS that can be infinite-dimensional.
Similar observations have been made for RKHS-valued kernel ridge regression \citep{li2022optimal,li2024towards}.

\paragraph{Implication to Missing Value Imputation.}

The second inequality in the condition~(\ref{eq:cond-n-k-rx})  implies that, for successful recovery of the missing value distribution, the support of
the covariate distribution $Q(x)$ for units with missing responses (see (\ref{eq:missing-pairs-joint-dist})) should be reasonably covered by the support of the covariate distribution $P(x)$ for units with observed responses.
To explain this, suppose that a missing-response unit has covariate $x'$, i.e., $x'$ is in the support of $Q(x)$, but $x'$ is not in the support of $P(x)$ so that there exists some $r' > 0$ with $P(B(x',r')) = 0$; then the condition~(\ref{eq:cond-n-k-rx}) is not satisfied for any $n$ and $k$.

\section{Synthetic Data Experiments} \label{sec:simulation}

We describe experiments to assess the empirical performance of kNNSampler in recovering the distribution of missing values.  Section~\ref{sec:experiment-settings} explains the settings, evaluation metrics, and benchmark methods. Section~\ref{sec:results} describes and discusses the results.

\subsection{Settings, Evaluation Metrics and Benchmarks} \label{sec:experiment-settings}

\subsubsection{Data Settings} \label{sec:synthetic-data}

We consider the following two models for data generation.
As before, let $n$ be the number of units with observed responses, $m$ be the number of units with missing responses, and $N = n+m$ be the total number of units.

\paragraph{Setup 1 (Linear with Chi-square noise).}
For each unit $i = 1, \dots, N$,  covariate $x_i$ is uniformly randomly generated on the interval $[-2, 2]$. Response $y_i$ is the sum of covariate $x_i$ and noise $\epsilon_i$ generated randomly from the chi-square distribution with degree of freedom $2$:
\begin{equation} \label{eq:synthetic-data-linear}
    y_i = x_i + \epsilon_i, \quad \text{where}~~ x_i \sim {\rm unif}([-2,2]), \quad \epsilon_i \sim \chi^2(2).
\end{equation}
Since chi-square noises are positive,  this setup enables assessing the capability of imputation methods to recover non-Gaussian, asymmetric data distributions.

\paragraph{Setup 2 (Noisy 2D ring).}
This model, considered by ~\cite{lalande2023numerical}, randomly generates covariate $x_i$ and response $y_i$ for each unit $i = 1, \dots, N$ from a noisy two-dimensional ring of unit radius perturbed with an additive Gaussian noise of variance $0.1$:
\begin{equation} \label{eq:synthetic-data-ring-data}
y_i = (1+\epsilon_i)\sin(\theta_i), \quad x_i = (1+\epsilon_i)\cos(\theta_i),
\quad \text{where}~~ \theta_i \sim \mathrm{unif}[0, 2\pi], \quad \epsilon_i \sim \mathcal{N}(0, 0.1).
\end{equation}
The conditional distribution of response $y_i$ given covariate $x_i$ is bi-modal when $x_i$ is between about $-0.5$ and $0.5$. Thus, this setup enables the assessment of imputation methods in recovering a multi-modal missing-value distribution.

\paragraph{Missing Data Mechanism}
We consider the MAR (missing at random) setting.\footnote{We also performed the experiments under the MCAR (missing completely at random) setting, but the results were similar and thus omitted.}
We select $m$ units uniformly randomly from the subset of the $N$ units whose covariates lie on the interval $[0.5, 1.5]$ and make their responses missing.
We set $m = 200$, and vary \( n \) to assess the effect of training size on imputation performance. Specifically, we set \( n \in \{2800, 4800, 6800, 8800, 10800\} \).

\subsubsection{Performance Metric: Energy Distance}
\label{sec:performance-metric}

To quantify the performance of an imputation method in recovering the missing value distribution, we compute the energy distance~\citep{szekely2013energy} between the empirical distributions of the complete and imputed datasets. 
We use the energy distance as it is a proper distance between distributions, can be easily computed from samples based on their Euclidean distances without the need for optimization (as compared with, e.g., a Wasserstein distance whose computation requires optimization to solve optimal transport~\citep{peyre2019computational}), and is parameter-free (in contrast to the MMD defined with, e.g., a Gaussian kernel, which depends on the bandwidth parameter). The energy distance is a canonical instance of MMD defined with a distance-based kernel~\citep{sejdinovic2013equivalence}: it is ``canonical'' in the sense that it is both scale-invariant (the distance scales linearly with the scale of data) and rotation-invariant \cite[Section 3]{szekely2013energy}.

Let $\tx_1, \dots, \tx_m$ be the covariates of the $m$ units whose responses $\ty_{1}, \dots, \ty_{m}$ are missing, and $\tilde{y}_{1}^*, \dots, \tilde{y}_{m}^*$ be their imputations.  For each unit $i$, let $z_i = (\tx_i, \ty_i)$  be the pair of the covariate and the true (missing) response, and $z_i^* = (\tx_i, \ty^*_i)$ be the pair of the covariate and the imputation. We compute the  energy distance between the empirical distributions of $D_m := \{ z_1, \dots, z_m \}$ and $D_m^* := \{ z_1^*, \dots, z_m^* \}$ as
\[
\mathcal{E}(D_m, D_m^*) := \frac{2}{m^2} \sum_{i,j=1}^m \|z_i - z_j^*\| - \frac{1}{m(m-1)} \sum_{i \ne j} \|z_i - z_j\| - \frac{1}{n(n-1)} \sum_{i \ne j} \|z_i^* - z_j^*\|.
\]
This is an unbiased estimate of the squared energy distance between the two joint distributions $Q(x,y) = P(y|x)Q(x)$ and $Q^*(x,y) = P^*(y|x)Q(x)$, where $P(y|x)$ is the true conditional distribution of true response $y$ given covariate $x$, $P^*(y|x)$ is the conditional distribution of imputed response $y$ given covariate $x$, and $Q(x)$ is the covariate distribution of missing units:
\[
\mathcal{E}(Q, Q^*) := 2\mathbb{E}\|z - z^*\| - \mathbb{E}\|z - z'\| - \mathbb{E}\|z^* - z^{*'}\|,
\]
where $z, z' \stackrel{i.i.d.}{\sim} Q$ and $z^*, z^{*'} \stackrel{i.i.d.}{\sim} Q^*$.

A lower energy distance indicates that the two joint distributions are more similar, implying better recovery of the missing-value distribution. A higher energy distance indicates that the imputed distribution is more dissimilar to the true data distribution.


\subsubsection{Benchmark Imputation Methods}\label{sec:benchmark-imput}

We compare kNNSampler with the following kNN-based and other imputation methods.

\textbf{Linear Imputation:} This method models the response-covariates relation as linear and imputes a missing response by its linear prediction applied to an observed covariate. It should be regarded as a benchmark slightly more sophisticated than naive methods such as mean imputation.

\textbf{Random Forest} \citep{stekhoven2012missforest}:
This method, widely used in practice, imputes a missing response by averaging its multiple predictions made by bootstrap-sampled tree regressors. It can learn a nonlinear relation between the response and covariate and handle the interactions among covariate features~\citep[e.g.,][]{shah2014comparison,tang2017random}. We use the default configuration in \texttt{scikit-learn}.

\textbf{kNNImputer} \citep{troyanskaya2001missing}:
See Section~\ref{sec:kNNImputer} for the description of the method.
We set the number $k$ of nearest neighbours as $k =5$, which is the default setting in \texttt{scikit-learn} and widely used in practice.

\textbf{kNN$\times$KDE \citep{lalande2023numerical}}:
 As explained earlier, this method generates an imputation by sampling from an estimated conditional density of a missing response given a covariate. The conditional density is estimated by weighted Gaussian kernel density estimation over observed responses, with weights derived from a softmax function applied to covariate distances. We use the authors’ recommended settings: inverse temperature \(\tau = 50\) and kernel bandwidth \(h = 0.03\).

As suggested earlier, the number $k$ of nearest neighbours for kNNSampler is determined by the fast leave-one-out cross-validation method of \citet{kanagawa2024fast} using the observed covariate-response pairs.

\subsection{Results}\label{sec:results}

\subsubsection{Qualitative Comparisons}

\begin{figure}[h!]
    \centering

    \begin{minipage}[b]{0.32\textwidth}
        \centering
        \includegraphics[width=\textwidth]{./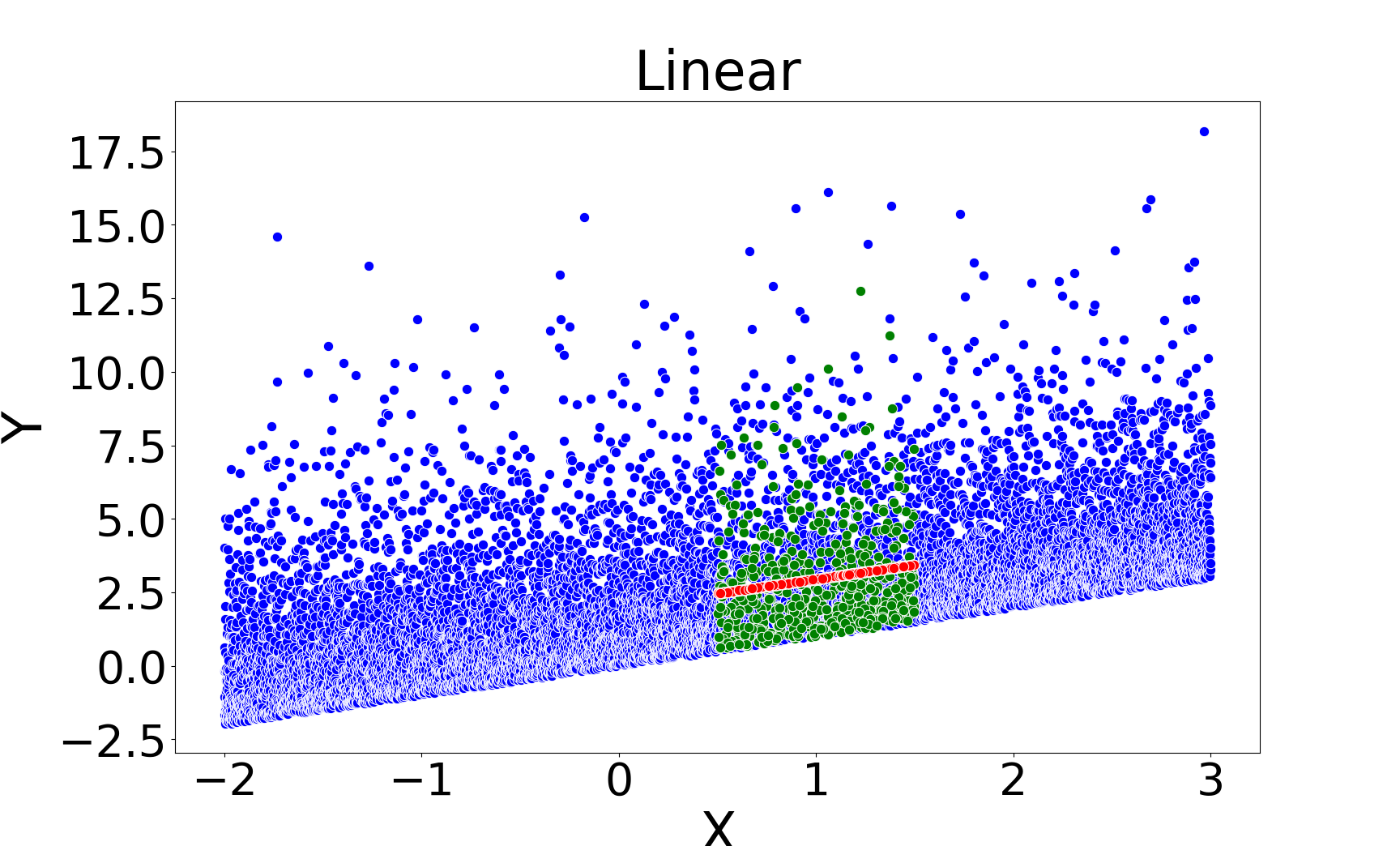}
    \end{minipage}
    \hfill
        \begin{minipage}[b]{0.32\textwidth}
        \centering
        \includegraphics[width=\textwidth]{./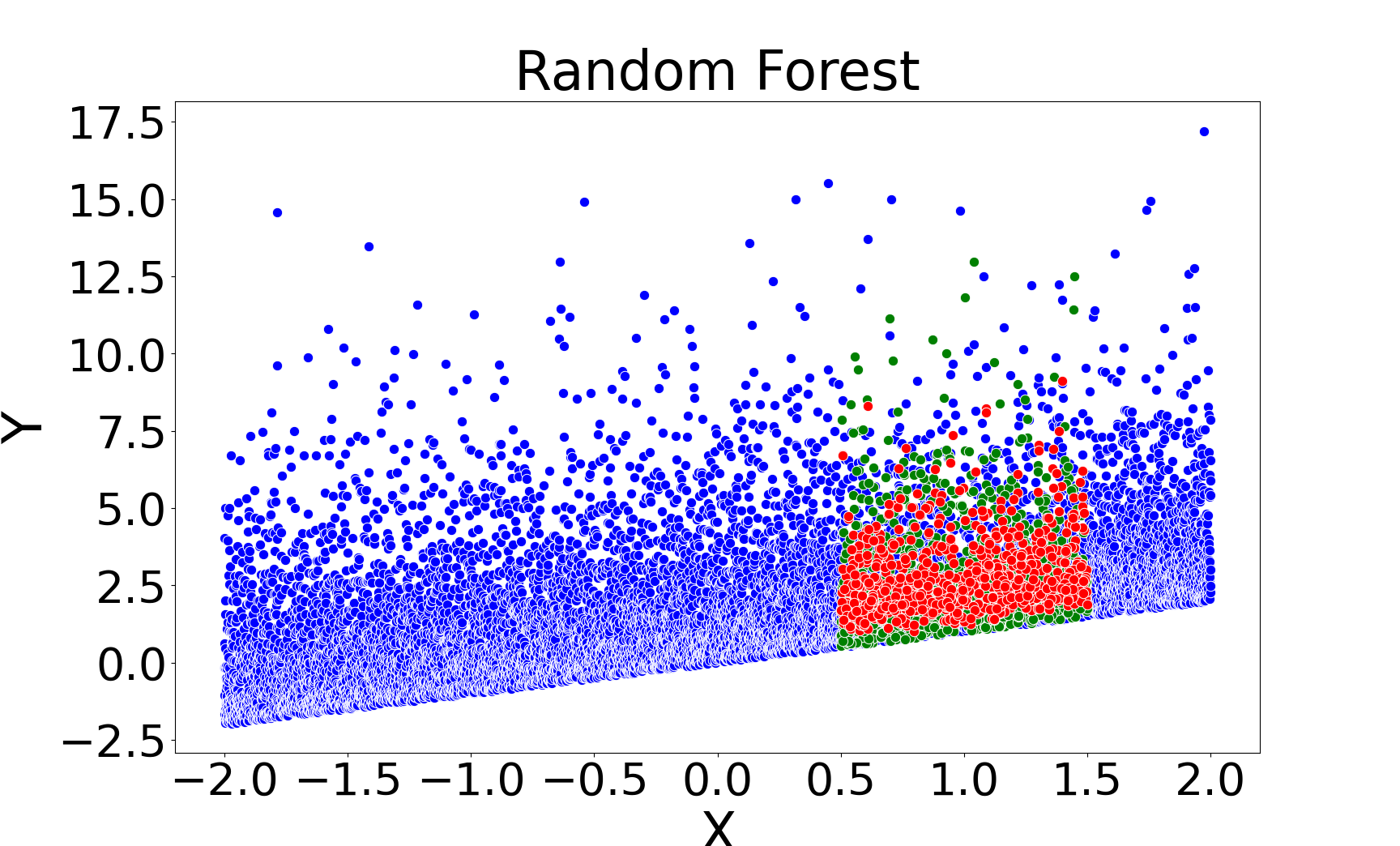}
    \end{minipage}
    \hfill
    \begin{minipage}[b]{0.32\textwidth}
        \centering
        \includegraphics[width=\textwidth]{./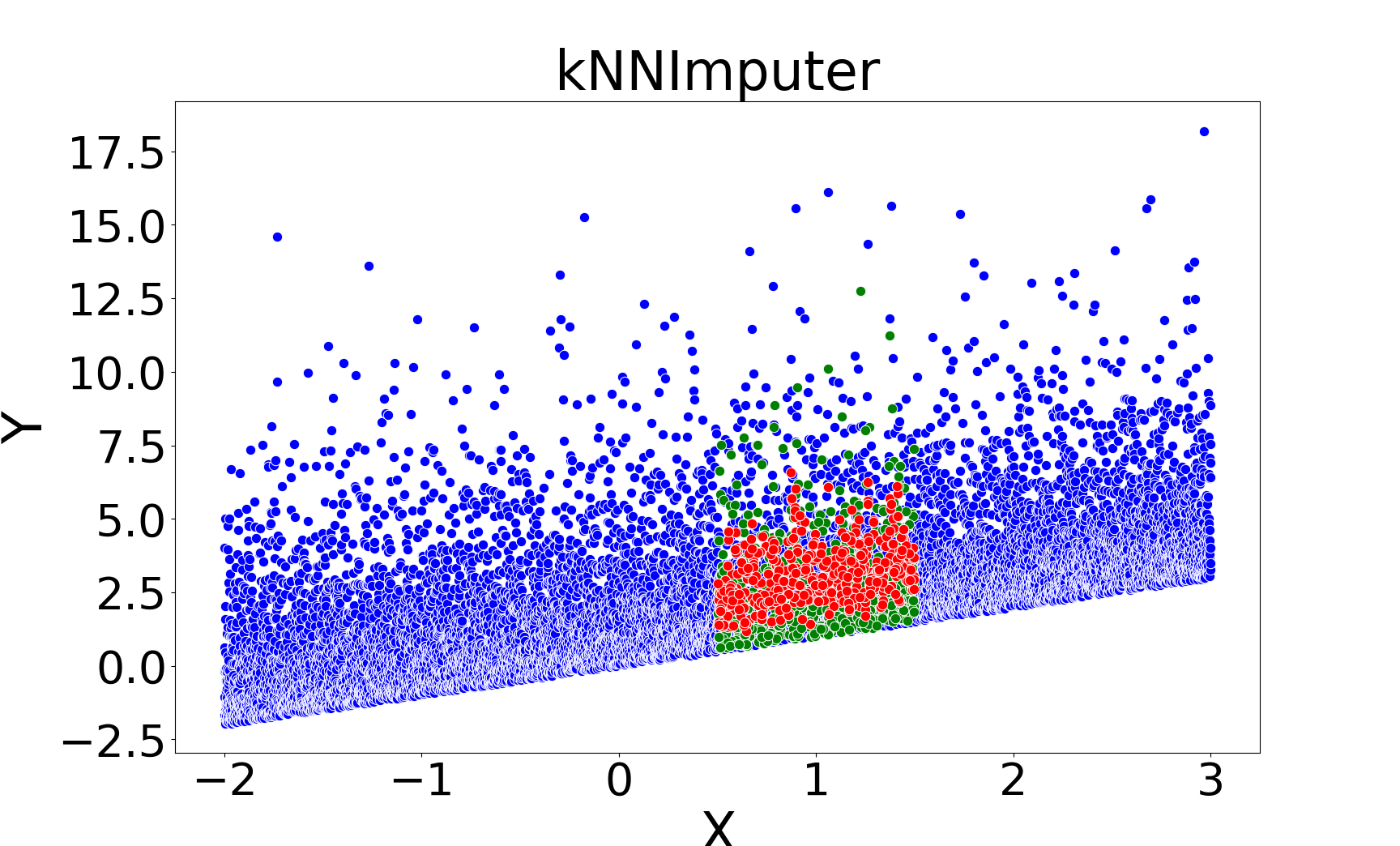}
    \end{minipage}
         \vspace{0.5cm}
    \begin{minipage}[b]{0.32\textwidth}
        \centering
        \includegraphics[width=\textwidth]{./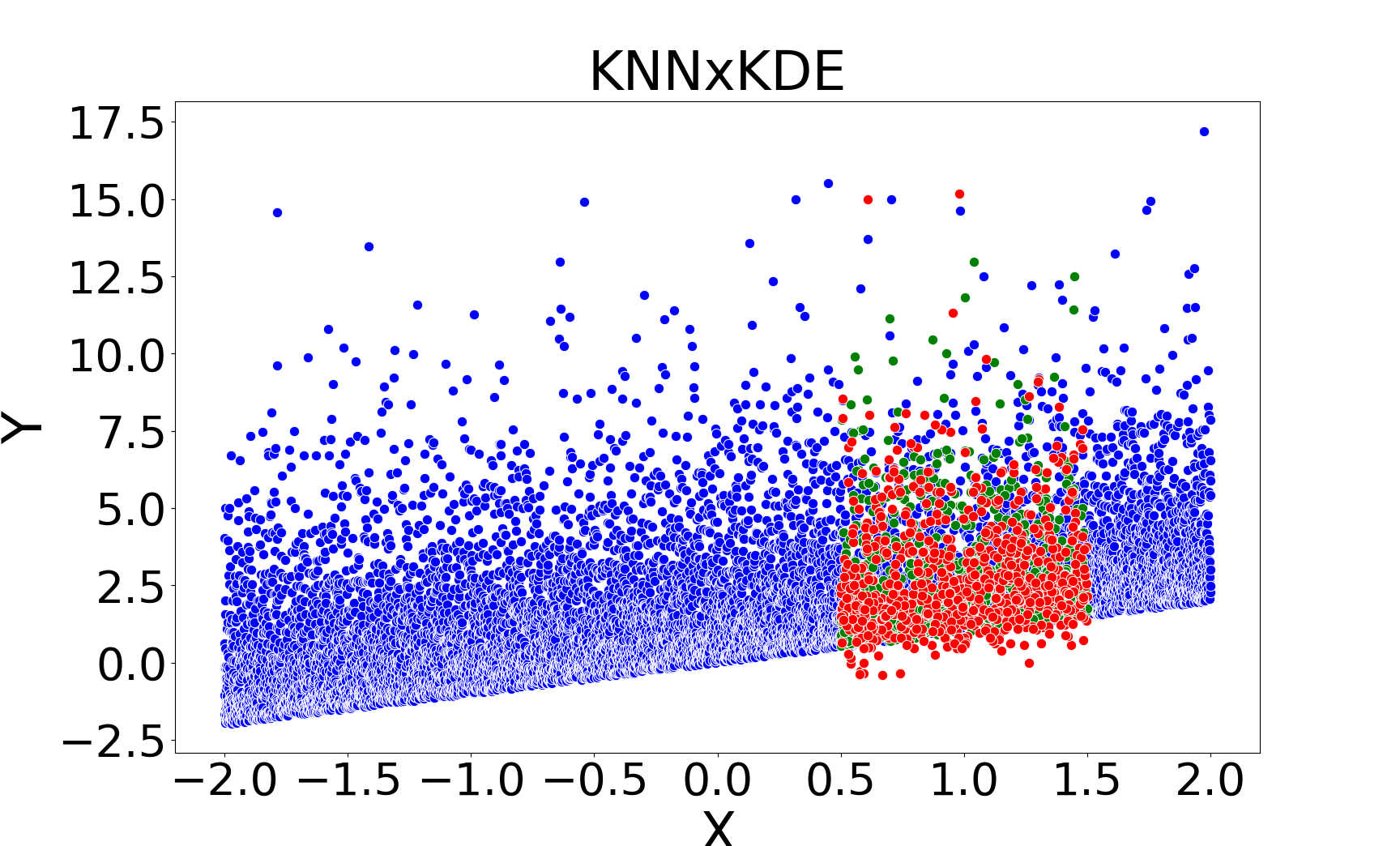}
    \end{minipage}
    \begin{minipage}[b]{0.32\textwidth}
        \centering
    \includegraphics[width=\textwidth]{./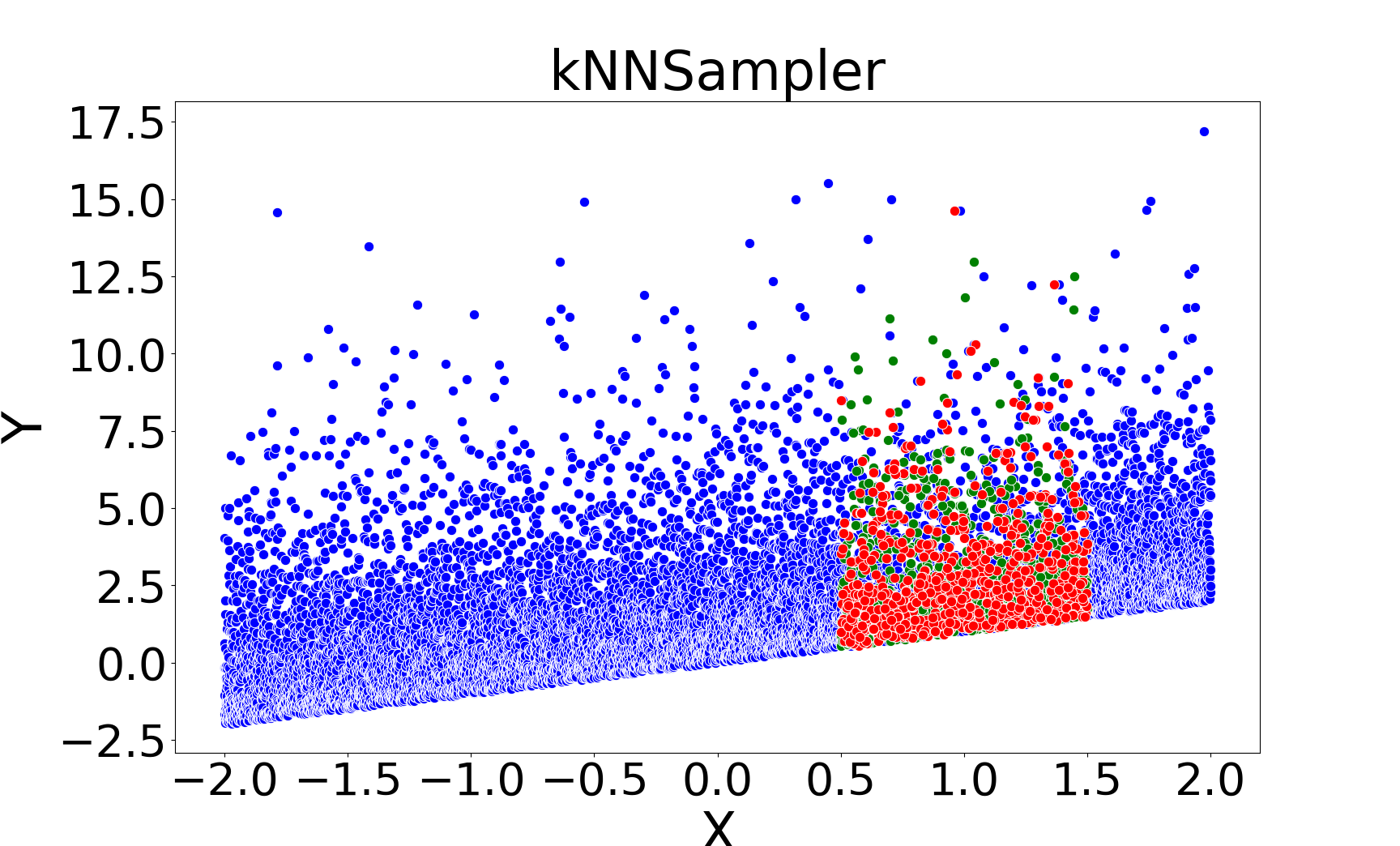}
    \end{minipage}
    \caption{
    Missing value imputations by different methods for a dataset from the linear chi-square model (\ref{eq:synthetic-data-linear}) with sample size $N = 10,000$ with $30\%$ missing rate under the MAR mechanism.  True missing responses are shown in green, imputations in red, and the rest in blue.
    }
    \label{fig:five_algos_MAR-linear}
\end{figure}

\begin{figure}[h!]
    \centering
    \begin{minipage}[b]{0.32\textwidth}
        \centering
        \includegraphics[width=\textwidth]{./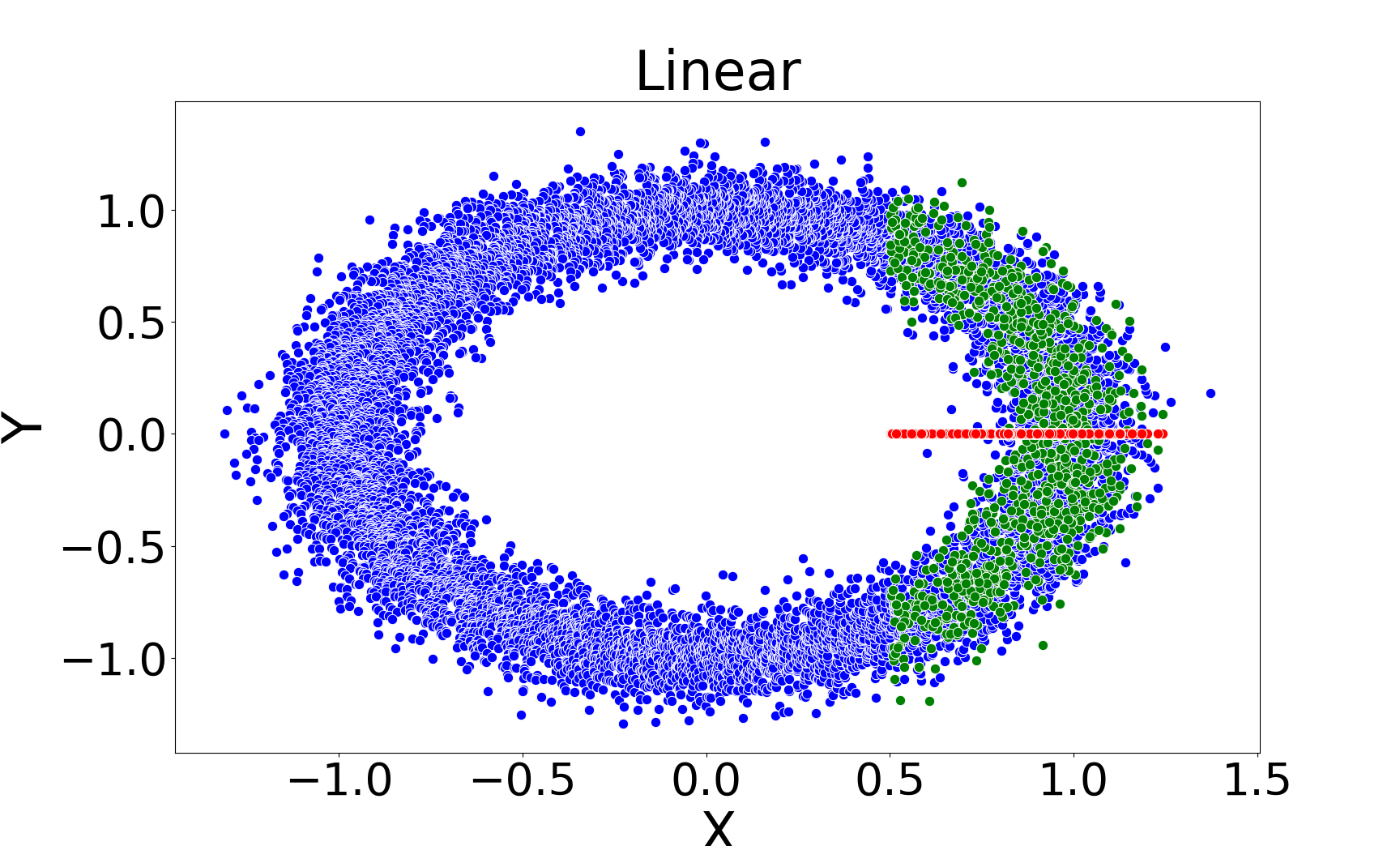}
    \end{minipage}
    \hfill
    \begin{minipage}[b]{0.32\textwidth}
        \centering
        \includegraphics[width=\textwidth]{./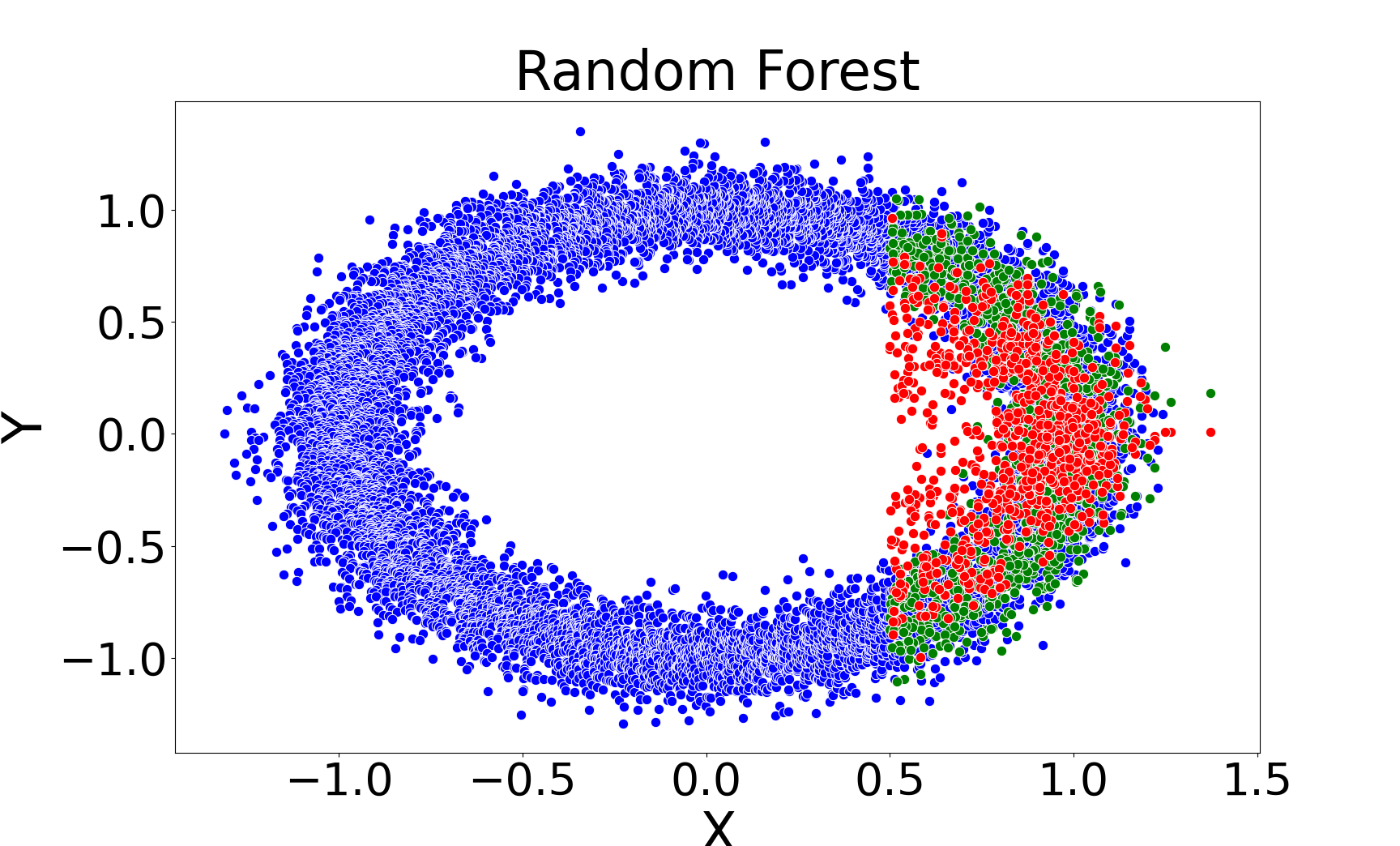}
    \end{minipage}
    \hfill
    \begin{minipage}[b]{0.32\textwidth}
        \centering
        \includegraphics[width=\textwidth]{./images/5-algos-figures/MAR/knnimputer-k5/knnimputer_ring.png}
    \end{minipage}

    \vspace{0.5cm}

    \begin{minipage}[b]{0.32\textwidth}
        \centering
        \includegraphics[width=\textwidth]{./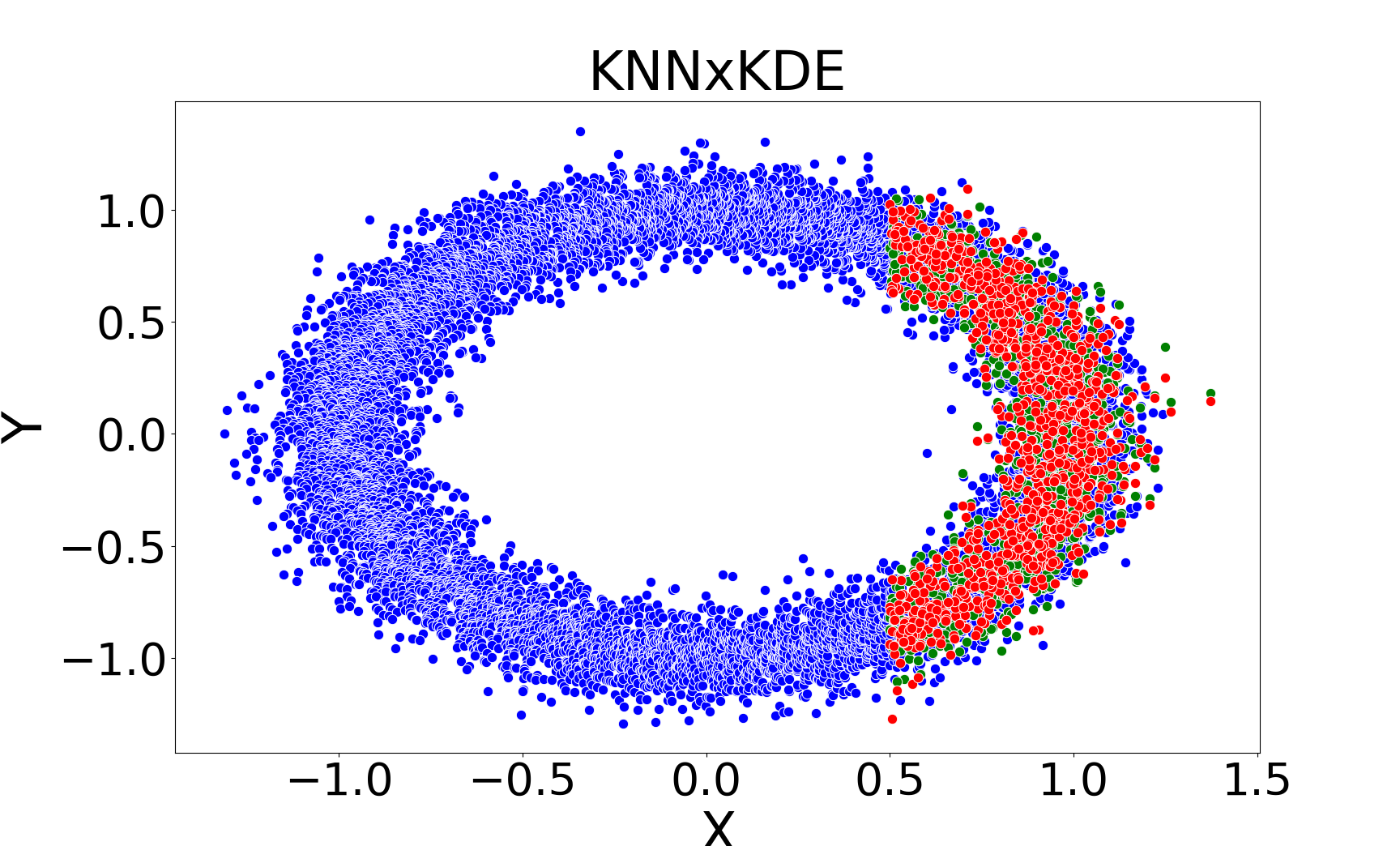}
    \end{minipage}
    \begin{minipage}[b]{0.32\textwidth}
        \centering
        \includegraphics[width=\textwidth]{./images/5-algos-figures/MAR/knnsampler/knnSampler-MAR-ring.png}
    \end{minipage}
    \caption{
    Missing value imputations by different methods for a dataset from the noisy ring model (\ref{eq:synthetic-data-ring-data}) with sample size $N = 10,000$ with $30\%$ missing rate under the MAR mechanism.  True missing responses are shown in green, imputations in red, and the rest in blue.
    }
    \label{fig:five_algos_MAR-ring}
\end{figure}

Figures~\ref{fig:five_algos_MAR-linear} and \ref{fig:five_algos_MAR-ring}  describe imputation results by the different methods on datasets generated from the linear chi-square model~(\ref{eq:synthetic-data-linear}) and the noisy ring model~(\ref{eq:synthetic-data-ring-data}), respectively, with sample size $N = 10,000$ and $30\%$ missing rate under the MAR mechanism. The results under the MCAR mechanism are similar and omitted.

The linear imputations ignore the variability in the missing responses and demonstrate the danger of naive imputation methods, such as mean and zero imputations.  The imputations by Random Forest and kNNImputer appear to be better than the linear imputations, but are distributed more narrowly than the distribution of missing responses. This is evident for the noisy ring dataset (Figure~\ref{fig:five_algos_MAR-ring}), for which the imputed responses lie inside the ring, which is outside the support of the missing value distribution.  This happens because these imputation methods estimate the conditional mean of the missing response given a covariate.

kNNSampler and kNN$\times$KDE  recover the distribution of missing values much better than the above imputation methods. However, kNN$\times$KDE generated imputations for the linear chi-square model (Figure~\ref{fig:five_algos_MAR-linear}) outside the support of the missing value distribution. This is because the noises in this dataset are asymmetric and non-Gaussian, while kNN$\times$KDE uses Gaussian noises for generating imputations.  In contrast, kNNSampler appears to recover the missing-value distributions accurately. We will next quantitatively compare these methods.

\subsubsection{Quantitative Comparisons}

Each experiment, consisting of data generation, imputations by each method, and the calculation of the evaluation metric, was independently repeated 10 times, and the mean and standard deviation of the evaluation metric are reported.
Tables~\ref{table:energy_distance_benchmark_MAR-linear} and \ref{table:energy_distance_benchmark_MAR-ring} report the results on the energy distance between the empirical distributions of the imputed and true missing values.
See Section~\ref{sec:performance-metric} for details.

\begin{table}[h!] 
\centering
\caption{
The energy distance between the empirical distributions of imputations and true missing values on the linear chi-square dataset (\ref{eq:synthetic-data-linear}). For each method and sample size, the average and standard deviation over 10 independent runs are shown.}
\begin{tabular}{c|ccccc}
\hline
\textbf{Sample Size} & \textbf{kNNSampler} & \textbf{Random Forest} & \textbf{kNNImputer} & \textbf{kNN$\times$KDE} & \textbf{Linear} \\ 
\hline
3000  & \textbf{0.027 $\pm$ 0.031} & 0.076 $\pm$ 0.023 & 0.200 $\pm$ 0.038 & \textbf{0.036 $\pm$ 0.033} & 0.585 $\pm$ 0.053 \\
5000  & \textbf{0.027 $\pm$ 0.009} & 0.077 $\pm$ 0.030 & 0.199 $\pm$ 0.041 & \textbf{0.033 $\pm$ 0.019} & 0.598 $\pm$ 0.025 \\
7000  & \textbf{0.027 $\pm$ 0.021} & 0.080 $\pm$ 0.018 & 0.219 $\pm$ 0.024 & \textbf{0.028 $\pm$ 0.018} & 0.589 $\pm$ 0.034 \\
9000  & \textbf{0.017 $\pm$ 0.009} & 0.076 $\pm$ 0.023 & 0.183 $\pm$ 0.031 & \textbf{0.016 $\pm$ 0.007} & 0.605 $\pm$ 0.054 \\
11000 & \textbf{0.018 $\pm$ 0.011} & 0.080 $\pm$ 0.033 & 0.198 $\pm$ 0.040 & \textbf{0.026 $\pm$ 0.021} & 0.584 $\pm$ 0.034 \\
\hline
\end{tabular}
\label{table:energy_distance_benchmark_MAR-linear}
\end{table}

\begin{table}[h!]
\centering
\caption{
The energy distance between the empirical distributions of imputations and true missing values on the noisy ring dataset (\ref{eq:synthetic-data-ring-data}). For each method and sample size, the average and standard deviation over 10 independent runs are shown.}
\begin{tabular}{c|ccccc}
\hline
\textbf{Sample Size} & \textbf{kNNSampler} & \textbf{Random Forest} & \textbf{kNNImputer} & \textbf{kNN$\times$KDE} & \textbf{Linear} \\ 
\hline
3000  & \textbf{0.021 $\pm$ 0.015} & 0.076 $\pm$ 0.025 & 0.181 $\pm$ 0.032 & \textbf{0.033 $\pm$ 0.017} & 0.584 $\pm$ 0.038 \\
5000  & \textbf{0.019 $\pm$ 0.015} & 0.069 $\pm$ 0.023 & 0.216 $\pm$ 0.055 & \textbf{0.024 $\pm$ 0.013} & 0.576 $\pm$ 0.042 \\
7000  & \textbf{0.028 $\pm$ 0.009} & 0.087 $\pm$ 0.031 & 0.189 $\pm$ 0.032 & \textbf{0.028 $\pm$ 0.015} & 0.612 $\pm$ 0.044 \\
9000  & \textbf{0.028 $\pm$ 0.022} & 0.074 $\pm$ 0.027 & 0.197 $\pm$ 0.043 & \textbf{0.020 $\pm$ 0.013} & 0.593 $\pm$ 0.033 \\
11000 & \textbf{0.019 $\pm$ 0.012} & 0.075 $\pm$ 0.027 & 0.194 $\pm$ 0.064 & 0.035 $\pm$ 0.040 & 0.606 $\pm$ 0.062 \\
\hline
\end{tabular}
\label{table:energy_distance_benchmark_MAR-ring}
\end{table}

\begin{table}[h!]
\centering
\caption{The root mean squared error of each imputation method for different sample sizes on the linear chi-square dataset (\ref{eq:synthetic-data-linear}). The mean and standard deviation over 10 independent runs are shown for each setting.}
\begin{tabular}{c|ccccc}
\hline
\textbf{Sample Size} & \textbf{kNNSampler} & \textbf{Random Forest} & \textbf{kNNImputer} & \textbf{kNN$\times$KDE} & \textbf{Linear} \\ 
\hline
3000  & 2.691 $\pm$ 0.151 & 2.338 $\pm$ 0.154 & \textbf{2.117 $\pm$ 0.158} & 2.876 $\pm$ 0.126 & \textbf{1.885 $\pm$ 0.195} \\
5000  & 2.710 $\pm$ 0.134 & 2.273 $\pm$ 0.113 & \textbf{2.092 $\pm$ 0.123} & 2.726 $\pm$ 0.190 & \textbf{1.914 $\pm$ 0.088} \\
7000  & 2.729 $\pm$ 0.102 & 2.307 $\pm$ 0.118 & \textbf{2.100 $\pm$ 0.185} & 2.789 $\pm$ 0.135 & \textbf{1.895 $\pm$ 0.121} \\
9000  & 2.786 $\pm$ 0.228 & 2.308 $\pm$ 0.076 & \textbf{2.065 $\pm$ 0.097} & 2.812 $\pm$ 0.095 & \textbf{1.945 $\pm$ 0.076} \\
11000 & 2.793 $\pm$ 0.188 & 2.388 $\pm$ 0.127 & \textbf{2.055 $\pm$ 0.116} & 2.708 $\pm$ 0.184 & \textbf{1.913 $\pm$ 0.154} \\
\hline
\end{tabular}
\label{table:RMSE-MAR-linear}
\end{table}

\begin{table}[h!]
\centering
\caption{The root mean squared error of each imputation method for different sample sizes on the noisy ring dataset (\ref{eq:synthetic-data-ring-data}). The mean and standard deviation over 10 independent runs are shown for each setting.}
\begin{tabular}{c|ccccc}
\hline
\textbf{Sample Size} & \textbf{kNNSampler} & \textbf{Random Forest} & \textbf{kNNImputer} & \textbf{kNN$\times$KDE} & \textbf{Linear} \\ 
\hline
3000  & 2.680 $\pm$ 0.238 & 2.309 $\pm$ 0.133 & \textbf{2.073 $\pm$ 0.169} & 2.811 $\pm$ 0.112 & \textbf{1.951 $\pm$ 0.108} \\
5000  & 2.818 $\pm$ 0.195 & 2.322 $\pm$ 0.141 & \textbf{2.079 $\pm$ 0.157} & 2.698 $\pm$ 0.130 & \textbf{1.870 $\pm$ 0.123} \\
7000  & 2.733 $\pm$ 0.216 & 2.307 $\pm$ 0.141 & \textbf{2.133 $\pm$ 0.163} & 2.799 $\pm$ 0.186 & \textbf{1.959 $\pm$ 0.179} \\
9000  & 2.638 $\pm$ 0.146 & 2.281 $\pm$ 0.103 & \textbf{2.138 $\pm$ 0.141} & 2.637 $\pm$ 0.177 & \textbf{1.923 $\pm$ 0.157} \\
11000 & 2.672 $\pm$ 0.137 & 2.281 $\pm$ 0.152 & \textbf{2.024 $\pm$ 0.075} & 2.763 $\pm$ 0.164 & \textbf{1.885 $\pm$ 0.152} \\
\hline
\end{tabular}
 \label{table:RMSE-MAR-ring}
\end{table}

kNNSampler and kNN$\times$KDE yielded significantly smaller energy distances than the other methods, which suggests that their imputations are distributed more similarly with the true missing values and align with Figures~\ref{fig:five_algos_MAR-linear} and \ref{fig:five_algos_MAR-ring}.
The energy distance for the linear imputer is the highest among the different methods, quantifying the large discrepancy between the distributions of the imputations and true missing values, as visually observed in Figures~\ref{fig:five_algos_MAR-linear} and \ref{fig:five_algos_MAR-ring}. 
The energy distances for kNNImputer and Random Forest are lower than those of the linear imputer, but they are still significantly higher than those of the two other methods. This is reasonable because they are estimating the conditional mean of the missing response given a covariate.

For comparison, we also report the root mean squared error (RMSE) for each method's imputations in Tables~\ref{table:RMSE-MAR-linear} and \ref{table:RMSE-MAR-ring}. RMSE is expected to be smaller for regression-based methods, which estimate the conditional means of missing values and thereby minimize RMSE. 
A smaller RMSE does not imply better recovery of the missing-value distribution.
Indeed, imputations from the linear imputer have the lowest RMSEs, but their distribution significantly differs from the distribution of true missing values, as quantified in Figures~\ref{table:energy_distance_benchmark_MAR-linear} and \ref{table:energy_distance_benchmark_MAR-ring} and visually observed in Figures~\ref{fig:five_algos_MAR-linear} and \ref{fig:five_algos_MAR-ring}.
This result demonstrates that the RMSE is not a good metric for evaluating the distributional similarity between imputations and missing values.
See~\citet{naf2023imputation} for a related discussion.

\subsection{kNNSampler Uncertainty Quantification}

This section evaluates kNNSampler's ability to quantify uncertainty in missing values, using the approach described in Section~\ref{sec:UQ-kNN}.
Figure~\ref{fig:CI-benchmark} shows the mean and standard deviation of the coverage probabilities of kNN prediction intervals over 10 independent runs, for each sample size and missing rate (MR).
As the sample size increases, the coverage probabilities converge to the designed probabilities (80\%, 90\%, 95\%) irrespective of the missing rate, supporting the validity of the prediction intervals.

 \begin{figure} [h!]
     \centering
     \includegraphics[width=\linewidth]{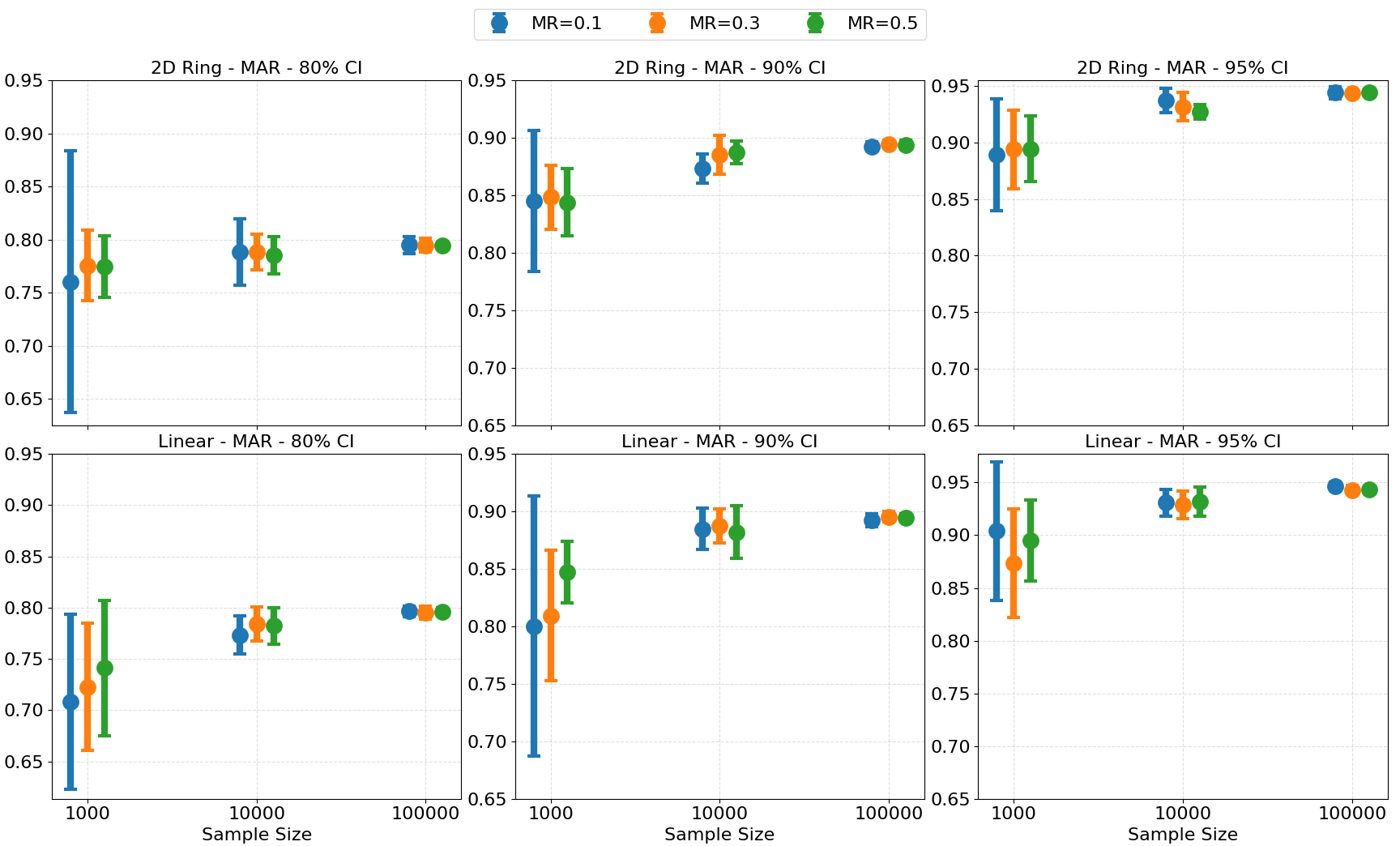}
     \caption{
     Coverage probabilities of kNN prediction intervals at different missing rates (MR) for different sample sizes.
     The mean and standard deviation over 10 independent runs are shown for each setting.
     The top three figures are on the noisy ring data, and the bottom three are on the linear chi-square data.
     }
     \label{fig:CI-benchmark}
 \end{figure}

\section{Real Data Experiments}

\label{sec:real_data}

Lastly, we present real-data experiments on solar power generated by photovoltaic panels, where missing values are common due to sensor failures and other factors \cite[e.g.,][]{phan2023enhancing,costa2024employing}.
We use a Kaggle dataset\footnote{\url{https://www.kaggle.com/datasets/samuelkamau/solar-data/}}  that contains solar panel DC powers (responses)  and the corresponding irradiations (covariates), totaling 67,698 covariate-response pairs.  
 We randomly select a subset of $N$ covariate-response pairs from the full dataset.
 In this subset, we select randomly $30 \%$ of the units whose covariates are between 0.4 and 0.6 and set their responses to missing. 
These missing responses are imputed based on the remaining observed covariate-response pairs in the subset. 
We consider each of \( N \in \{10,000,~ 20,000,~ 30,000,~ 40,000,~ 50,000,~ 60,000 \} \).
The configuration of each method follows Section~\ref{sec:benchmark-imput}.

This experiment is repeated 10 times independently for each setting, and the mean and standard deviation of the energy distance between imputations and true missing values are reported in Table~\ref{table:real_data_ed} (see Section~\ref{sec:performance-metric}).
kNNSampler consistently gives lower energy distances than the other methods, this time including kNN$\times$KDE.  Moreover, kNNSampler's energy distance decreases as the sample size increases, which aligns with its theoretical consistency in recovering missing-value distributions.

To understand the results, Figure~\ref{fig:real_data_comparison} describes imputations by kNNSampler, kNN$\times$KDE, and kNNImputer based on the full dataset.  kNN$\times$KDE's imputations do not capture well the heterogeneity and non-negativity of the missing-value distribution, as the imputations are sampled from Gaussian distributions with a fixed, common variance. kNNImputer's imputations are distributed more narrowly than the missing-value distribution. In contrast, kNNSampler's imputations are distributed similarly to the true missing values, successfully recovering the missing-value distribution.

\begin{figure}[h!]
    \centering
        \includegraphics[width=0.49\textwidth]{./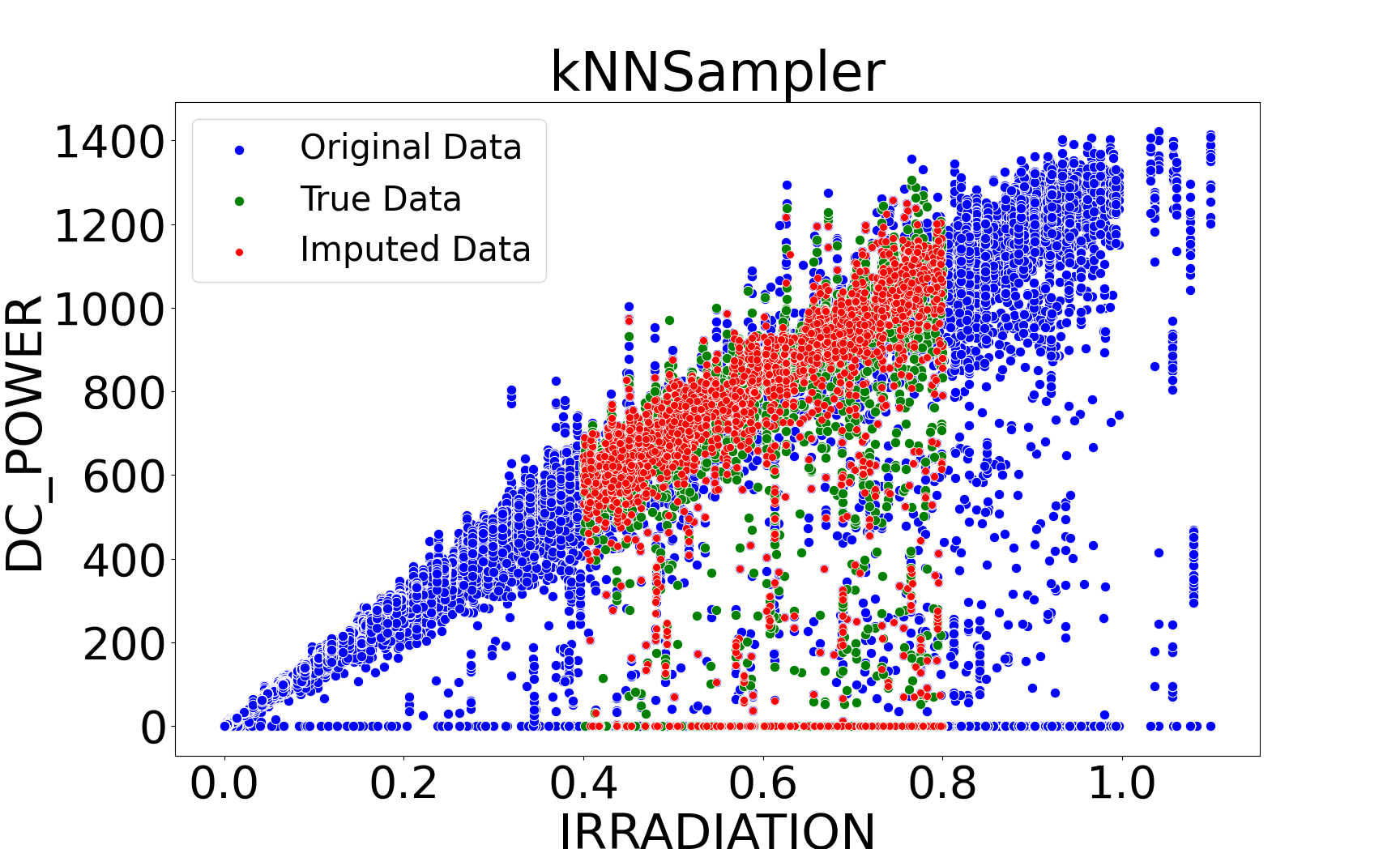} 
        \includegraphics[width=0.49\textwidth]{./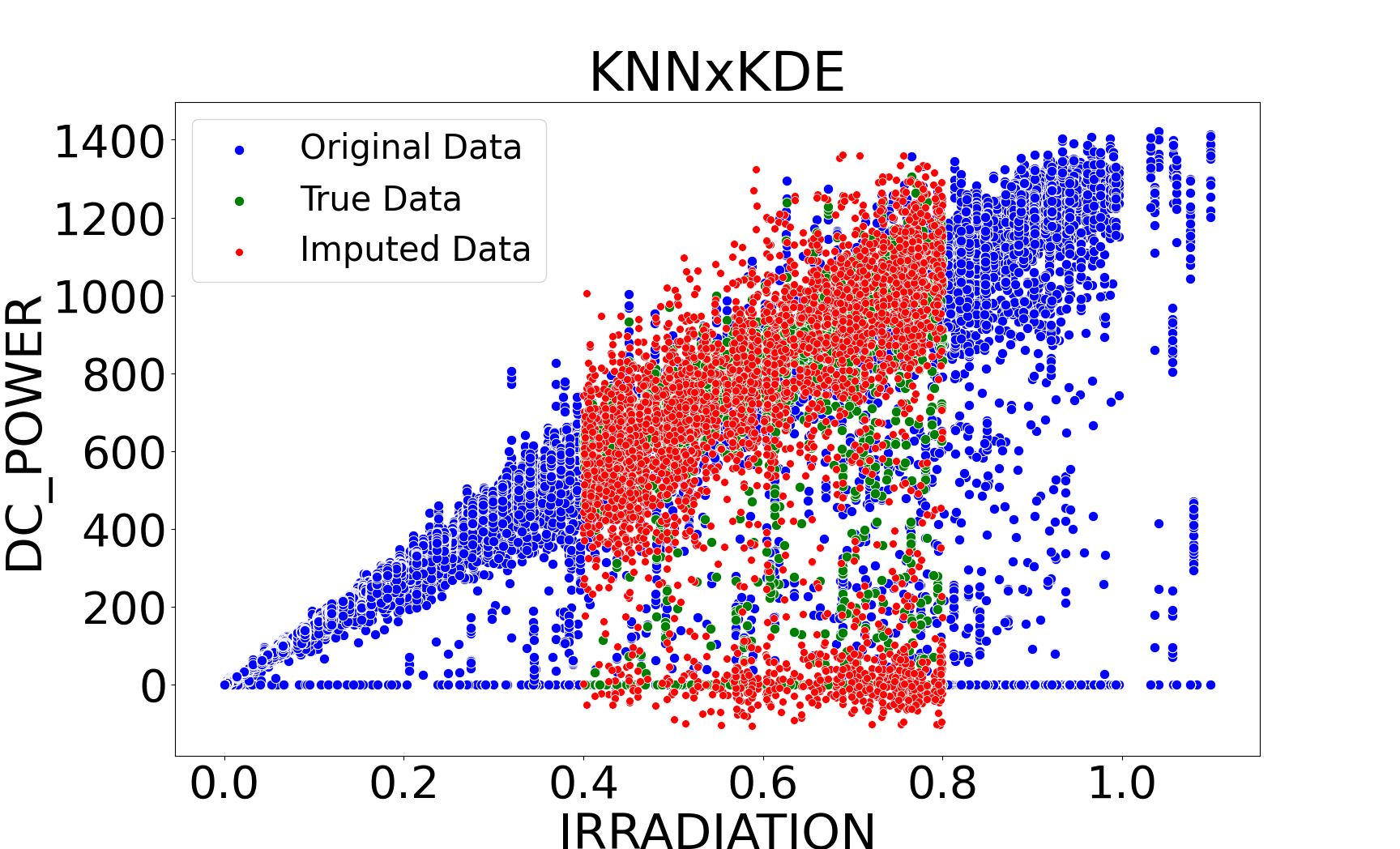}
    \includegraphics[width=0.49\textwidth]{./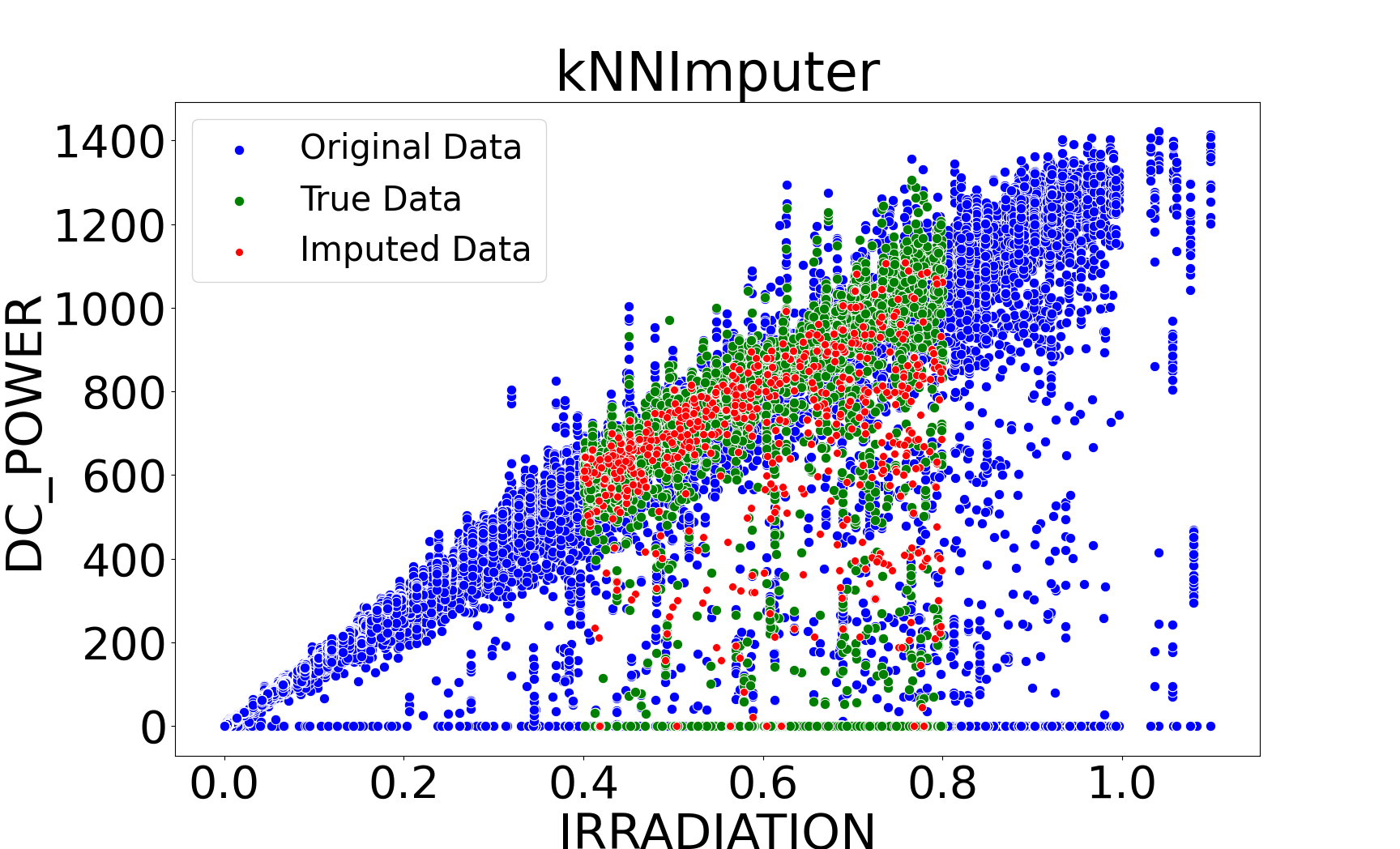} 
    \caption{
    Missing value imputations by kNNSampler, kNN$\times$KDE, and kNNImputer on the full solar panel dataset in Section~\ref{sec:real_data}  with $30\%$ missing rate under the MAR mechanism.  True missing responses are shown in green, imputations in red, and the rest in blue.
    }
    \label{fig:real_data_comparison}
\end{figure}



\begin{table}[h!]
\centering
\caption{Comparison of the energy distance between the empirical distributions of imputations and true missing values across different sample sizes of the real solar panel dataset in Section~\ref{sec:real_data}. For each method and sample size, the average and standard deviation of the energy distance over 10 independent runs are shown.}
\begin{tabular}{c|ccccc}
\hline
\textbf{Sample Size} & \textbf{kNNSampler} & \textbf{Random Forest} & \textbf{kNNImputer} & \textbf{kNN$\times$KDE} & \textbf{Linear} \\
\hline
10000 & $\mathbf{1.855} \boldsymbol{\pm 1.474}$ & $5.544 \pm 1.916$ & $10.619 \pm 2.958$ & $3.333 \pm 2.609$ & $190.546 \pm 16.701$ \\
20000 & $\mathbf{0.687} \boldsymbol{\pm 0.671}$ & $4.980 \pm 0.977$ & $7.389 \pm 1.763$ & $2.634 \pm 1.238$ & $195.623 \pm 9.364$ \\
30000 & $\mathbf{0.500} \boldsymbol{\pm 0.309}$ & $5.112 \pm 1.207$ & $4.874 \pm 1.144$ & $2.273 \pm 0.990$ & $195.412 \pm 3.468$ \\
40000 & $\mathbf{0.373} \boldsymbol{\pm 0.317}$ & $5.543 \pm 0.684$ & $4.584 \pm 0.890$ & $2.293 \pm 1.165$ & $194.081 \pm 5.721$ \\
50000 & $\mathbf{0.190} \boldsymbol{\pm 0.138}$ & $5.667 \pm 0.968$ & $4.161 \pm 0.881$ & $2.559 \pm 1.133$ & $198.261 \pm 4.039$ \\
60000 & $\mathbf{0.148} \boldsymbol{\pm 0.077}$ & $6.230 \pm 0.812$ & $4.634 \pm 1.053$ & $1.927 \pm 0.942$ & $194.808 \pm 3.526$ \\
\hline
\end{tabular}
\label{table:real_data_ed}
\end{table}

\section{Conclusion and Discussion}  \label{sec:conclusion}

We studied kNNSampler, a stochastic missing-value imputation method that imputes a missing response of a given unit by searching for its $k$ most similar units in terms of covariates and by randomly sampling one of the associated $k$ observed responses. This method is interpreted as sampling from an approximate kNN-based conditional distribution of a missing response given a covariate. Assuming a Lipschitz condition that the true conditional distribution changes continuously with covariates, we proved that the kNN conditional distribution converges to the true conditional distribution as the number $k$ of nearest neighbours increases at a rate slower than the sample size increases.  This analysis offers a theoretical justification for kNNSampler, and may be of independent as it analyzes a novel kNN-based estimator of the Hilbert space embedding of a conditional distribution.  Empirical results demonstrate the capability of kNNSampler in recovering the distributions of missing values.

We discuss limitations of the current work, some of which stem from hot-deck methods in general (see \citealt{andridge2010review}), and potential future directions. 
(i) While the experiments show the promising performance of kNNSampler, they are limited to low-dimensional covariates. Experiments with higher-dimensional covariates are needed to fully characterize the KNNSampler's practical performance. 
(ii) For higher-dimensional covariates, the choice of the distance function itself influences kNNSampler's performance, which should be investigated. 
(iii) Leave-one-out cross-validation for selecting the number of nearest neighbours uses the mean square error, which should be modified to a distributional error metric.
(iv) A further theoretical analysis is needed to understand how the distributional imputation quality affects subsequent analysis of a quantity of interest, such as the well-calibratedness of uncertainty estimates.

\subsubsection*{Acknowledgments}
We thank Frédéric Coutellier, Michele Bezzi and colleagues at SAP Labs France for their support and discussion. 
We also thank the reviewers and the Action Editor for their time and constructive feedback.

\bibliographystyle{apalike}
\bibliography{Bibfile}

\appendix

\section{Proof of Theorem~\ref{theo:error-upper-bound}}
\label{sec:proof-main-theorem}

\begin{proof}
We proceed as the proof of \citet[Theorem 1]{kpotufe2011k} on real-valued kNN regression, with adaptations to our RKHS-valued kNN regression setting.

The RKHS distance between the mean embeddings of the true and kNN conditional distributions is decomposed into the ``bias'' and ``variance'' terms:
\begin{align}
   & \left\| \Phi( P(\cdot \mid x)) - \Phi( \hat{P}(\cdot \mid x) ) \right\|_{\cH}^2 \nonumber \\
   & =  \left\| \Phi( P(\cdot \mid x)) - \mathbb{E}[ \Phi( \hat{P}(\cdot \mid x) ) \mid X_n ]  ~~ + ~~    \mathbb{E}[\Phi( \hat{P}(\cdot \mid x) ) \mid X_n ] -  \Phi( \hat{P}(\cdot \mid x) )  \right\|_{\cH}^2, \nonumber  \\
   &  \leq 2 \underbrace{\left\| \Phi( P(\cdot \mid x)) - \mathbb{E}[ \Phi( \hat{P}(\cdot \mid x) ) \mid X_n ] \right\|_\cH^2 }_{\rm Bias}  ~~ + ~~  2 \underbrace{ \left\| \mathbb{E}[ \Phi( \hat{P}(\cdot \mid x) ) \mid X_n ] -  \Phi( \hat{P}(\cdot \mid x) ) \right\|_\cH^2 }_{\rm Variance}, \label{eq:bias-variance-bound-proof}
\end{align}
where $\mathbb{E}[ \Phi( \hat{P}(\cdot \mid x) ) \mid X_n ]$ is the conditional expectation of $\Phi( \hat{P}(\cdot \mid x))$ given $X_n = (x_1, \dots, x_n)$, the expectation being taken for the $n$ output values $y_1, \dots, y_n$:
\begin{align} \label{eq:cond-exp-exres-proof}
\mathbb{E}[ \Phi( \hat{P}(\cdot \mid x) ) \mid X_n ]  =  \frac{1}{k} \sum_{j \in {\rm NN}(x, k, X_n)}
 \mathbb{E}[  \Phi(y_j) \mid X_n ] = \frac{1}{k} \sum_{j \in {\rm NN}(x, k, X_n)} \Phi(P(\cdot \mid x_j)),
\end{align}
where the last identity follows from $y_j \sim P(\cdot \mid x_j)$.

Lemma~\ref{lemma:bias} in Section~\ref{sec:bias-bound} and Lemma~\ref{lemma:variance} in Section~\ref{sec:variance-bound} respectively provide probabilistic upper bounds of the bias and variance terms in the upper bound~(\ref{eq:bias-variance-bound-proof}), each holding simultaneously for all $x \in \cX$, $k \in \{1,\dots, n\}$ and $r > 0$ satisfying the condition~(\ref{eq:cond-n-k-rx}) with probability at least $1 - \delta$.
The claim follows from using these probabilistic bounds in (\ref{eq:bias-variance-bound-proof}).

\end{proof}

\subsection{Bias Bound}
\label{sec:bias-bound}

Lemma \ref{lemma:kpotufe-VC} below is from \citet[Lemma 1]{kpotufe2011k}.
\begin{lemma} \label{lemma:kpotufe-VC}
    Suppose that Assumption~\ref{as:VC-dimensions} holds.
      Let $x_1, \dots,x_n \stackrel{i.i.d.}{\sim} P(x)$ be an i.i.d.~sample of size $n$ from a probability distribution $P$ on $\cX$, and $P_n := \frac{1}{n} \sum_{i=1}^n \delta_{x_i}$ be the empirical distribution.
      Let $0 < \delta < 1$.
      Then,
      $$ P_n(B) = \frac{1}{n} \sum_{i=1}^n \mathbb{I}[x_i \in B] \geq a $$
      holds simultaneously for all balls $B \in \cB$ and for all constants $a > 0$ satisfying
      $$
      P(B) \geq 3a \quad \text{and} \quad        a \geq \frac{\cV_\cB \ln(2n) + \ln(8/\delta) }{n}.
      $$
       with probability at least $1-\delta$.

\end{lemma}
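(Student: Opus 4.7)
\textbf{Proof proposal for Lemma \ref{lemma:kpotufe-VC}.} The plan is to derive the statement from a single uniform relative Vapnik-Chervonenkis deviation inequality for the class $\cB$ of balls, which controls $P(B)-P_n(B)$ in terms of $\sqrt{P(B)}$ uniformly, and then invert that control to read off the threshold $a$ below which $P_n(B)$ cannot fall once $P(B) \geq 3a$.

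First, I would invoke the classical one-sided relative VC inequality (see, e.g., Boucheron--Bousquet--Lugosi, Chapter 12): combined with the Sauer--Shelah estimate $\mathcal{S}_\cB(2n) \leq (2n)^{\cV_\cB}$, it yields, for every $\eta>0$,
$$\Pr\!\left( \sup_{B \in \cB} \frac{P(B) - P_n(B)}{\sqrt{P(B)}} > \eta \right) \;\leq\; 4\,(2n)^{\cV_\cB}\exp\!\left(-\frac{n\eta^2}{4}\right).$$
Choosing $\eta$ so that the right-hand side is at most $\delta$, that is $\eta^2 \asymp (\cV_\cB \ln(2n) + \ln(1/\delta))/n$, produces a single event $\Omega_0$ of probability at least $1-\delta$ on which $P_n(B) \geq P(B) - \eta \sqrt{P(B)}$ holds for every $B \in \cB$ simultaneously. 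The crucial structural point is that $\Omega_0$ depends on the sample alone and not on any particular $a$ or $B$, so any pointwise implication derivable from the uniform bound carries over to a simultaneous statement.

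Second, on $\Omega_0$ I would verify by elementary algebra that the two hypotheses $P(B)\ge 3a$ and $na \ge \cV_\cB \ln(2n)+\ln(8/\delta)$ jointly force $P_n(B) \ge a$. Writing $s := \sqrt{P(B)} \geq \sqrt{3a}$, the inequality to verify is $s^2 - \eta s \geq a$, which holds whenever $s$ exceeds the larger root of $s^2 - \eta s - a = 0$; the prescribed lower bound on $a$ is calibrated precisely so that $\eta$ is small enough (of order $\sqrt{a}$) for $\sqrt{3a}$ to dominate this root. Because this implication is deterministic on $\Omega_0$, the conclusion holds uniformly over every eligible pair $(B,a)$ once $\Omega_0$ is realized.

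The main obstacle I expect is the bookkeeping of numerical constants, specifically aligning the factor $3$ in $P(B) \geq 3a$ and the exact constant $8$ inside $\ln(8/\delta)$ with whichever form of the relative VC bound one cites, since textbook formulations differ by small multiplicative factors. If the square-root route leaves a residual constant mismatch, a clean fallback avoids the $\sqrt{P(B)}$ correction altogether: for each fixed $B$ with $P(B) \geq 3a$, the multiplicative Chernoff lower-tail bound applied to the binomial $nP_n(B)$ gives $\Pr(P_n(B)\leq a) \leq \exp(-2nP(B)/9) \leq \exp(-2na/3)$, and a standard symmetrization-and-projection argument restricts the union bound to at most $(2n)^{\cV_\cB}$ effective balls, yielding the stated form up to the explicit constants by tuning the Chernoff parameter. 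Either route reproduces the lemma as stated.
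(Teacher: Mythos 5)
The paper does not actually prove this lemma: it is imported verbatim by the citation to \citet[Lemma 1]{kpotufe2011k}, so there is no in-paper argument to compare against. Your strategy --- a uniform relative VC deviation inequality over $\cB$, Sauer--Shelah to get the $(2n)^{\cV_\cB}$ growth factor, and an algebraic inversion --- is exactly how the cited source establishes the result, so the route is the right one.

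However, the algebra in your primary route does not close with the stated constants, and this is more than bookkeeping. From $4(2n)^{\cV_\cB}\exp(-n\eta^2/4)\le\delta$ you get $\eta^2 = 4(\cV_\cB\ln(2n)+\ln(4/\delta))/n$, hence under the lemma's hypothesis on $a$ only $\eta\le 2\sqrt{a}$. The larger root of $s^2-\eta s-a=0$ is then $\tfrac{1}{2}(\eta+\sqrt{\eta^2+4a}) = (1+\sqrt{2})\sqrt{a} \approx 2.41\sqrt{a}$, which strictly exceeds $\sqrt{3a}\approx 1.73\sqrt{a}$; indeed at $P(B)=3a$ one gets $P(B)-\eta\sqrt{P(B)} = (3-2\sqrt{3})a<0$. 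So $P(B)\ge 3a$ does \emph{not} force $P_n(B)\ge a$ on your event $\Omega_0$; you would need $P(B)\ge(3+2\sqrt{2})a$, or equivalently a threshold on $a$ inflated by a factor of $3$. The clean fix, and the one matching the stated constants, is to use the companion form of the relative VC inequality with the \emph{empirical} measure under the square root: with probability at least $1-\delta$, simultaneously for all $B\in\cB$,
\begin{equation*}
P(B) \;\le\; P_n(B) + \sqrt{P_n(B)\,\alpha_n} + \alpha_n, \qquad \alpha_n := \frac{\cV_\cB\ln(2n)+\ln(8/\delta)}{n}.
\end{equation*}
Its contrapositive is immediate: if $P_n(B)<a$ and $a\ge\alpha_n$, then $P(B) < a + \sqrt{a\cdot a} + a = 3a$, contradicting $P(B)\ge 3a$. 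Your fallback route is also not sound as described: the multiplicative Chernoff bound for a fixed ball depends on the true mass $P(B)$, and two balls inducing the same intersection with $X_n$ (hence the same ``effective ball'') generally have different $P(B)$, so one cannot simply union-bound the per-ball Chernoff tails over $(2n)^{\cV_\cB}$ representatives; symmetrization controls deviations between the sample and a ghost sample, not directly between $P_n$ and $P$, and an extra step is needed to return to the stated inequality.
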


\begin{lemma} \label{lemma:bias}
Suppose that Assumptions~\ref{as:lipschitz}, \ref{as:doubling-dimensions} and \ref{as:VC-dimensions} hold.
Let $(x_1, y_1), \dots, (x_n, y_n) \stackrel{i.i.d.}{\sim} P(y|x)P(x)$.
    Let $0 < \delta < 1$.
    Then the following bound holds with probability at least $1 - \delta$ simultaneously for all $x \in \cX$,  $k \in \{1, \dots, n \}$  and $0 < r < r_{\rm max}$ satisfying the condition~(\ref{eq:cond-n-k-rx})
\begin{align*}
     \left\| \Phi(P(\cdot \mid x)) -   \mathbb{E}[ \Phi( \hat{P}(\cdot \mid x) ) \mid X_n ] \right\|_{\cH}
     \leq \lambda r \left( \frac{3 C_{\rm dist} k}{n P( B(x, r) )} \right)^{1/d}
\end{align*}

\end{lemma}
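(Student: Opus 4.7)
The plan is to bound the bias using (i) the triangle inequality in the RKHS together with the Lipschitz continuity of the conditional mean embedding (Assumption~\ref{as:lipschitz}) to reduce the problem to bounding the distance from $x$ to its $k$-th nearest neighbor in $X_n$, and then (ii) a density/doubling argument combined with Lemma~\ref{lemma:kpotufe-VC} to control that nearest-neighbor distance uniformly in $x$, $k$, and $r$.

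First, I would invoke the formula~(\ref{eq:cond-exp-exres-proof}) for the conditional expectation, apply the triangle inequality inside the Hilbert space, and then apply Assumption~\ref{as:lipschitz} termwise to get
\begin{align*}
\left\| \Phi(P(\cdot \mid x)) - \mathbb{E}[ \Phi( \hat{P}(\cdot \mid x) ) \mid X_n ] \right\|_{\cH}
&\leq \frac{1}{k}\sum_{j \in \NN(x,k,X_n)} \left\| \Phi(P(\cdot \mid x)) - \Phi(P(\cdot \mid x_j)) \right\|_\cH \\
&\leq \lambda\cdot\max_{j \in \NN(x,k,X_n)} d_\cX(x, x_j).
\end{align*}
So the task reduces to showing that, with probability at least $1-\delta$, the $k$-th nearest neighbor of $x$ lies within distance $r \cdot \left( 3 C_{\rm dist} k / (n P(B(x, r)))\right)^{1/d}$, simultaneously over all admissible $(x,k,r)$.

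Next, set $\epsilon := \left( \frac{3 C_{\rm dist} k}{n P(B(x,r))} \right)^{1/d}$. The second part of condition~(\ref{eq:cond-n-k-rx}) ensures $\epsilon < 1$, so $\epsilon r < r < r_{\rm max}$ and Assumption~\ref{as:doubling-dimensions} applies to give
\[
P\bigl( B(x,\epsilon r) \bigr) \geq \frac{\epsilon^d}{C_{\rm dist}} P\bigl( B(x, r) \bigr) = \frac{3k}{n}.
\]
The first part of condition~(\ref{eq:cond-n-k-rx}) gives $k/n \geq \bigl( \cV_\cB \ln(2n) + \ln(8/\delta) \bigr)/n$, so the ball $B(x, \epsilon r)$ and constant $a = k/n$ satisfy the hypotheses of Lemma~\ref{lemma:kpotufe-VC}. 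Since Lemma~\ref{lemma:kpotufe-VC} is uniform over all balls in $\cB$ and all admissible $a$, the same event of probability at least $1-\delta$ handles every $(x,k,r)$ simultaneously.

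On this event, $P_n(B(x,\epsilon r)) \geq k/n$, i.e.\ at least $k$ of the sample covariates $x_1,\dots,x_n$ lie within distance $\epsilon r$ of $x$. Hence the $k$-th nearest neighbor of $x$ is at distance at most $\epsilon r$, and combining with the Lipschitz reduction from the first paragraph yields the claimed bound. The main subtlety is making the argument genuinely uniform in $(x,k,r)$; this is resolved because Lemma~\ref{lemma:kpotufe-VC} already states its conclusion uniformly over all balls and all valid thresholds $a$, so no additional union bound is required.
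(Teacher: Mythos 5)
Your proposal is correct and follows essentially the same route as the paper's proof: the triangle inequality plus Assumption~\ref{as:lipschitz} reduces the bias to $\lambda$ times the $k$-th nearest-neighbor distance, and then the choice $\epsilon = \left( 3 C_{\rm dist} k / (n P(B(x,r))) \right)^{1/d}$ combined with Assumption~\ref{as:doubling-dimensions} and Lemma~\ref{lemma:kpotufe-VC} (applied with $a = k/n$, uniformly over all balls) gives $r_{k,n}(x) \leq \epsilon r$. No substantive differences.
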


\begin{proof}

By using the triangle inequality and the Lipschitz continuity of the mapping $x \mapsto \Phi( P(\cdot \mid x) )$ in Assumption~\ref{as:lipschitz}, we obtain
\begin{align}
& \left\| \Phi(P(\cdot \mid x)) -   \mathbb{E}[ \Phi( \hat{P}(\cdot \mid x) ) \mid X_n ] \right\|_{\cH} \nonumber \\
= & \left\| \Phi(P(\cdot \mid x)) -  \frac{1}{k} \sum_{j \in {\rm NN}(x, k, X_n)} \Phi(P(\cdot \mid x_j)) \right\|_{\cH}
= \left\| \frac{1}{k} \sum_{j \in {\rm NN}(x, k, X_n)} \left\{ \Phi(P(\cdot \mid x)) -    \Phi(P(\cdot \mid x_j)) \right\} \right\|_{\cH} \nonumber \\
 \leq & \frac{1}{k} \sum_{j \in {\rm NN}(x, k, X_n)}  \left\|  \Phi(P(\cdot \mid x)) -    \Phi(P(\cdot \mid x_j))\right\|_{\cH} \leq  \frac{1}{k} \sum_{j \in {\rm NN}(x, k, X_n)}  \lambda d_\cX(x, x_j) \leq \lambda r_{n,k}(x), \label{eq:lipshitz-bound-bias-proof}
\end{align}
where $r_{n,k}(x)$ is the distance between $x$ and its $k$-th nearest neighbour in $X_n$.
This distance is bounded as in the proof of \citet[Lemma 2]{kpotufe2011k}, which leads to the claimed bound.
For completeness, we prove it here.

The first inequality in the condition~(\ref{eq:cond-n-k-rx}) implies that
$$
a := \frac{k}{n} \geq   \frac{\cV_\cB \ln(2n) + \ln(8/\delta) }{n}.
$$
Define a constant $0 < \epsilon < 1$ as
$$
\epsilon := \left( \frac{3C_{\rm dist} k}{n P(B(x, r))} \right)^{1/d},
$$
where $\epsilon < 1$ follows from the second inequality in the condition~(\ref{eq:cond-n-k-rx}).
Then, Assumption~\ref{as:doubling-dimensions} implies that
$$
 P( B(x, \epsilon r) ) \geq  C_{\rm dist}^{-1} \epsilon^{d}  P( B(x,r) ) = 3 \cdot \frac{k}{n} = 3 a
$$
Thus, Lemma~\ref{lemma:kpotufe-VC} with this choice of $a$ implies that the following holds simultaneously for all $x \in \cX$, $k \in \{1, \dots, n\}$ and $0 < r < r_{\rm max}$ satisfying the condition~(\ref{eq:cond-n-k-rx}) with probability at least $1-\delta$:
$$
 P_n \left( B(x, \epsilon r ) \right) \geq a = \frac{k}{n} = P_n  \left(~ B(x, r_{k,n}(x)) ~ \right),
$$
where the second identity follows from that $r_{k,n}(x)$ is the distance between $x$ and its $k$-nearest neighbour, so the ball of center $x$ and radius  $r_{k,n}(x)$ contains $k$ points from $x_1, \dots, x_n$.
This implies that
$$
r_{k,n}(x) \leq  \epsilon r \leq   r \left( \frac{3 C_{\rm dist} k}{n P( B(x, r) )} \right)^{1/d}
$$
simultaneously holds for all $x \in \cX$, $k \in \{1, \dots, n\}$ and $0 < r < r_{\rm max}$ satisfying the condition~(\ref{eq:cond-n-k-rx})  with probability at least $1 - \delta$.
The claim is obtained by using this and the bound~(\ref{eq:lipshitz-bound-bias-proof}).

\end{proof}

\subsection{Variance Bound} \label{sec:variance-bound}

\begin{lemma} \label{lemma:variance}
Suppose that Assumptions~ \ref{as:bounded} and \ref{as:VC-dimensions} hold.
Let $(x_1, y_1), \dots, (x_n, y_n) \stackrel{i.i.d.}{\sim} P(y|x)P(x)$.
Let $0 < \delta < 1$.
The following bound simultaneously holds for all $x \in \cX$ and $k \in \{1, \dots, n\}$ with probability at least $1 - \delta$:
\begin{align} \label{eq:lemma-2215}
& \left\|  \mathbb{E}[ \Phi( \hat{P}(\cdot \mid x) ) \mid X_n ]  -  \Phi( \hat{P}(\cdot \mid x) )  \right\|_{\cH}^2   \leq   2 C_{\rm ker}^2 \cdot \frac{1 + 4 \left( \cV_\cB \ln (n) - \ln (\delta) \right)}{k}.
\end{align}
\end{lemma}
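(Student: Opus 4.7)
I would work conditionally on $X_n = (x_1, \ldots, x_n)$, where by the i.i.d.\ assumption the responses $y_1, \ldots, y_n$ are independent with $y_i \mid X_n \sim P(\cdot \mid x_i)$. Setting $\xi_i := \Phi(y_i) - \mathbb{E}[\Phi(y_i) \mid X_n]$, Assumption~\ref{as:bounded} gives $\|\Phi(y_i)\|_\cH = \sqrt{\ell(y_i,y_i)} \le C_{\rm ker}$ and therefore $\|\xi_i\|_\cH \le 2 C_{\rm ker}$ a.s., with $\mathbb{E}[\xi_i \mid X_n] = 0$. By (\ref{eq:cond-exp-exres-proof}), the quantity to bound equals $\bigl\|k^{-1}\sum_{j \in \mathrm{NN}(x,k,X_n)} \xi_j\bigr\|_\cH^2$, so the task reduces to controlling the deviation of a sample average of conditionally independent, centered, Hilbert-valued variables, uniformly in $x$ and $k$.

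First, for any index set $S \subset \{1,\ldots,n\}$ that is $X_n$-measurable with $|S|=k$, the family $\{\xi_j\}_{j\in S}$ is conditionally independent given $X_n$. A Hilbert-space Hoeffding / Pinelis inequality then yields, for every $t>0$,
$$\Pr\!\left(\left\|\tfrac{1}{k}\sum_{j\in S}\xi_j\right\|_\cH \ge t ~\bigg|~ X_n\right) \le 2 \exp\!\left(-\frac{k t^2}{c\, C_{\rm ker}^2}\right),$$
for an absolute constant $c$. This is the pointwise concentration that must now be promoted to a uniform-in-$(x,k)$ statement.

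The key step is a VC-based union bound. For a fixed realization of $X_n$, every neighbour set $\mathrm{NN}(x,k,X_n)$ coincides (up to tie-breaking) with $\{i : x_i \in B(x, r_{k,n}(x))\}$, i.e., with the trace on $\{x_1,\ldots,x_n\}$ of some ball in $\cB$. By Sauer-Shelah applied to the VC class $\cB$ (Assumption~\ref{as:VC-dimensions}), the number of distinct such traces is at most $(en/\cV_\cB)^{\cV_\cB} \le n^{\cV_\cB}$ for $n \ge \cV_\cB$; crucially, this single enumeration simultaneously covers all pairs $(x,k)$, since subsets of all sizes are included. Union-bounding the conditional Hoeffding tail over these $\le n^{\cV_\cB}$ admissible index sets and inverting in $t$ produces a bound of the form
$$\left\|\tfrac{1}{k}\sum_{j \in \mathrm{NN}(x,k,X_n)} \xi_j\right\|_\cH^2 \le \frac{C'\, C_{\rm ker}^2 (\cV_\cB \ln n - \ln \delta)}{k}$$
holding with probability at least $1-\delta$ conditionally on $X_n$; taking total probability over $X_n$ removes the conditioning, and recognizing the left-hand side as the variance term via (\ref{eq:cond-exp-exres-proof}) gives (\ref{eq:lemma-2215}).

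The main obstacle I anticipate is matching the precise constants in the stated bound (the factors $2$ and $4$ and the additive $1$). This requires invoking the sharp Pinelis form of the Hilbert-valued Hoeffding inequality, which has the optimal constant $2 M^2$ in the denominator of the exponent (with $M=2C_{\rm ker}$), rather than the looser bound one gets from a naive argument, and carefully absorbing the $\ln(en/\cV_\cB)^{\cV_\cB}$ term arising from the VC enumeration into $\cV_\cB \ln n$. The conceptual skeleton -- conditional independence given $X_n$, a single Hilbert-space concentration inequality, and one VC union bound over all ball-traces on $X_n$ -- is what delivers both the $1/k$ rate and the $\cV_\cB \ln n$ dependence.
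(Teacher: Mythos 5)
Your proposal is correct in substance and would establish the lemma at the claimed rate, but it reaches it by a genuinely different concentration argument than the paper. The paper never invokes a vector-valued Hoeffding inequality: it works with the scalar variable $\psi = \bigl\| \mathbb{E}[\Phi(\hat{P}(\cdot\mid x))\mid X_n] - \Phi(\hat{P}(\cdot\mid x)) \bigr\|_{\cH}$, shows that changing any single response among the $k$ neighbours moves $\psi$ by at most $2\sqrt{2}C_{\rm ker}/k$, applies McDiarmid's inequality to concentrate $\psi$ around its conditional mean, and then bounds that mean separately through the explicit second-moment computation $\mathbb{E}[\psi^2\mid X_n] = k^{-2}\sum_j \mathbb{E}[\|z_j\|_\cH^2\mid X_n]\le 2C_{\rm ker}^2/k$ (the cross terms vanish by conditional independence and centering); the two pieces are glued with $(a+b)^2\le 2a^2+2b^2$ and Jensen, which is exactly where the additive ``$1+$'' in the stated constant originates. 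Your route collapses these two steps into a single application of a Pinelis-type Hilbert-space Hoeffding inequality to the centered sum $k^{-1}\sum_{j}\xi_j$; this is legitimate, since a Hilbert space is $2$-smooth and the $\xi_j$ are conditionally independent, centered, and bounded — indeed by $\sqrt{2}\,C_{\rm ker}$ via (\ref{eq:bound-output-noise}), which is sharper than your $2C_{\rm ker}$ and worth using. What you lose is the exact form of the constant: inverting $2n^{\cV_\cB}\exp(-kt^2/(cC_{\rm ker}^2))=\delta$ yields a bound proportional to $C_{\rm ker}^2(\cV_\cB\ln n + \ln(2/\delta))/k$ rather than the precise $2C_{\rm ker}^2(1+4(\cV_\cB\ln n-\ln\delta))/k$ of (\ref{eq:lemma-2215}), though this is immaterial for Theorem~\ref{theo:error-upper-bound}. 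The one genuinely non-routine step — conditioning on $X_n$ so that the neighbour index set is deterministic, and union-bounding over the at most $n^{\cV_\cB}$ distinct ball traces on $X_n$, a single enumeration that covers all $(x,k)$ pairs at once — is identical in both proofs, and you handle it correctly.
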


\begin{proof}

Denote by $\psi(\NN(x,k,X_n)) \geq 0$ the left hand side of the inequality~(\ref{eq:lemma-2215}) without the square:
\begin{align} \label{eq:rand-var-dist}
\psi(\NN(x,k,X_n)) &:=  \left\|  \mathbb{E}[ \Phi( \hat{P}(\cdot \mid x) ) \mid X_n ]  -  \Phi( \hat{P}(\cdot \mid x) )  \right\|_{\cH} \\
&= \left\|  \frac{1}{k} \sum_{j \in {\rm NN}(x, k, X_n)} \Phi(P(\cdot \mid x_j))  -  \Phi( \hat{P}(\cdot \mid x) )  \right\|_{\cH} \nonumber \\
& =  \left\|  \frac{1}{k} \sum_{j \in {\rm NN}(x, k, X_n)}   \Phi(P(\cdot \mid x_j))  -  \Phi(y_j)   \right\|_{\cH}, \nonumber
\end{align}
where the last expression follows from the definition of $\Phi( \hat{P}(\cdot \mid x) )$ in  (\ref{eq:mean-embeddings-cond-dists}).
The notation $\psi(\NN(x,k,X_n))$ emphasizes that it depends only on the subset of training data $(x_1, y_1), \dots, (x_n, y_n)$ associated with the indices $\NN(x,k,X_n)$ of the $k$-nearest neighbours of $x$ in $X_n = \{x_1, \dots, x_n\}$.

Because of the bound~(\ref{eq:bound-output-noise}), changing $y_i$ for any $i \in {\rm NN}(x, k, X_n)$ to any different value $y'_i \in \cY$ changes the value of $\psi(\NN(x,k,X_n))$ at most $2 \sqrt{2} C_{\rm ker} / k$. 
This can be shown as follows. 
Let us write the last expression of (\ref{eq:rand-var-dist}) with the original $y_i$ and the one with $y_i$ replaced by $y'_i$ as
 \begin{align*}
  \left. \psi(\NN(x,k,X_n))\right|_{y_i} =  \left\| A + B \right\|_{\cH}, \quad  \quad   
   \left. \psi(\NN(x,k,X_n))\right|_{y'_i}  =    \left\| A' + B \right\|_{\cH},
\end{align*}
where 
\begin{align*}
  & A := \frac{1}{k}  \left( \Phi(P(\cdot \mid x_i))  -  \Phi(y_i)  \right), \quad \quad  A'  := \frac{1}{k}  \left( \Phi(P(\cdot \mid x_i))  -  \Phi(y_i')  \right) , \\
  & B :=  \frac{1}{k} \sum_{j \in {\rm NN}(x, k, X_n) ~\text{and}~ j \neq i }   \Phi(P(\cdot \mid x_j))  -  \Phi(y_j)  .  
\end{align*}
The triangle inequality implies that 
\begin{align*}
     \left\| A + B \right\|_{\cH} \leq \| A \|_\cH + \| B \|_\cH, \quad \quad  \left\| A' + B \right\|_{\cH} \geq  \| B \|_\cH - \| A' \|_\cH.
\end{align*}
Therefore, 
\begin{align*}
& \left. \psi(\NN(x,k,X_n)) \right|_{y_i} - \left. \psi(\NN(x,k,X_n))
 \right|_{y'_i}  =  \| A + B \|_\cH -  \| A' + B \|_\cH   \\
&  \leq   \| A \|_\cH + \| B \|_\cH - ( \| B \|_\cH - \| A' \|_\cH )  = \| A \|_\cH + \| A' \|_\cH.
\end{align*}
Similarly, 
\begin{align*}
    \left. \psi(\NN(x,k,X_n))
 \right|_{y'_i} -  \left. \psi(\NN(x,k,X_n)) \right|_{y_i}  \leq \| A \|_\cH + \| A' \|_\cH.
\end{align*}
Hence,
\begin{align*}
& \left| \left. \psi(\NN(x,k,X_n)) \right|_{y_i} - \left. \psi(\NN(x,k,X_n))
 \right|_{y'_i} \right|  \leq \| A \|_\cH + \| A' \|_\cH \\
& =  \frac{1}{k}  \left\| \Phi(P(\cdot \mid x_i))  -  \Phi(y_i)  \right\|_\cH +   \frac{1}{k}  \left\| \Phi(P(\cdot \mid x_i))  -  \Phi(y_i')  \right\|_\cH  \leq \frac{2 \sqrt{2} C_{\rm ker} }{k},
\end{align*}
where the last inequality follows from the bound~(\ref{eq:bound-output-noise}).

On the other hand, the output  $y_i$ associated with any non-$k$-nearest neighbours $i \not\in {\rm NN}(x, k, X_n)$ does not appear in $\psi(\NN(x,k,X_n))$, so changing the value of $y_i$ in this case does not change $\psi(\NN(x,k,X_n))$.

Thus, for fixed $X_n$, the probability that the random variable $\psi(\NN(x,k,X_n))$ exceeds its expectation $\mathbb{E}[ \psi(\NN(x,k,X_n)) ]$ plus any positive constant $\epsilon > 0$ is upper bounded by using McDiarmid's inequality as
\begin{align} \label{eq:prob-bound-for-fixed-x-and-k}
& {\rm Pr}\left( \psi({\rm NN}(x,k,X_n))  >   \mathbb{E}\left[ \psi({\rm NN}(x,k,X_n)) \mid X_n \right] + \epsilon  \mid X_n \right)
\leq \exp \left( - \frac{ \epsilon^2 k }{  4 C_{\rm ker}^2  }    \right).
\end{align}
This is a bound for fixed $x$ and $k$.

Next, for fixed $X_n$, we consider the probability that the statement
\begin{equation}  \label{eq:statement-dist}
    \psi({\rm NN}(x,k,X_n))  >   \mathbb{E}\left[ \psi({\rm NN}(x,k,X_n)) \mid X_n \right] + \epsilon
\end{equation}
 holds for {\em some} $x \in \cX$ and $k \in \{1, \dots, n\}$.
The number of {\em distinct} such statements is identical to the number of distinct index sets of nearest neighbours ${\rm NN}(x,k,X_n)$, since the random variable $\psi({\rm NN}(x,k,X_n))$ depends only on the subset of $(x_1, y_1), \dots, (x_n,y_n)$ associated with ${\rm NN}(x,k,X_n)$, as mentioned previously.
In other words, if there are other $x' \in \cX$ and $k' \in \{1, \dots, n\}$ that give the identical index set of nearest neighbours as for $x$ and $k$, i.e.,
$$
\NN(x', k', X_n) = \NN(x, k, X_n),
$$
then the random variable $\psi({\rm NN}(x',k',X_n))$ for $x'$ and $k'$ is identical to that for $x$ and $k$:
$$
\psi({\rm NN}(x',k',X_n)) = \psi({\rm NN}(x,k,X_n)).
$$

The number of distinct index sets of nearest neighbours is identical to the number of distinct ways the set $X_n$ of $n$ points is intersected by balls $B(x, r_{k,n}(x))$ of center $x$ and radius $r_{k,n}(x)$ being the distance of the $k$-th nearest neighbour from $x$.
This number is upper-bounded by the number of distinct ways $X_n$ is intersected by the class $\cB = \{ B(x,r) \mid x \in \cX, \ r > 0 \}$ of all balls, which is further upper-bounded by $n^{\cV_\cB}$ with the VC dimension $\cV_\cB$ of $\cB$ \citep[p.6]{kpotufe2011k}.
Therefore, by using the union bound, the probability that the statement~(\ref{eq:statement-dist}) holds for some $x$ and $k$ is upper bounded by the bound~(\ref{eq:prob-bound-for-fixed-x-and-k}) times $n^{\cV_\cB}$:
\begin{align}
& {\rm Pr}\left( \psi(\NN(x,k,X_n)) >   \mathbb{E}[\psi(\NN(x,k,X_n)) \mid X_n] + \epsilon ~ \text{  for some  } x \in \cX \text{ and } k \in \{1,\dots,n\}  \mid X_n \right) \nonumber \\
& \leq n^{\cV_\cB}  \exp \left( - \frac{ \epsilon^2 k }{  4 C_{\rm ker}^2  }    \right) \quad \text{for all} \quad \epsilon > 0. \label{eq:union-bound-variance}
\end{align}

Now, set
$$
\delta =  n^{\cV_\cB}  \exp \left( - \frac{ \epsilon^2 k }{  4 C_{\rm ker}^2  }  \right) \ \Longleftrightarrow \ \epsilon^2 = \frac{4C_{\rm ker}^2 \left( \cV_\cB \ln (n) - \ln(\delta) \right)}{k}.
$$
For any value of $0 < \delta < 1$, there is a corresponding $\epsilon > 0$.
Then, the bound (\ref{eq:union-bound-variance}) implies that, for fixed $X_n$, the following upper bound on the random variable $\psi({\rm NN}(x,k,X_n))$ squared holds for {\em all} $x \in \cX$ and $k \in \{1, \dots,n\}$ with at least probability $1 - \delta$:
\begin{align*}
 \psi({\rm NN}(x,k,X_n))^2
 & \leq  2 \mathbb{E}[ \psi({\rm NN}(x,k,X_n)) \mid X_n ]^2 + 2 \epsilon^2 \\
 &\leq  2 \mathbb{E}[ \psi({\rm NN}(x,k,X_n))^2 \mid X_n ] + 2 \epsilon^2,
\end{align*}
where the second inequality follows from Jensen's inequality.
Replacing $\psi( \NN(x, k, X_n) )$ by its definition~(\ref{eq:rand-var-dist}) and $\epsilon^2$ by the above expression, we obtain the following bound on the variance term that holds for all $x$ and $k$ with probability at least $1 - \delta$:
\begin{align}
& \left\|  \frac{1}{k} \sum_{j \in {\rm NN}(x, k, X_n)} \Phi(P(\cdot \mid x_j))  -  \Phi( \hat{P}(\cdot \mid x) )  \right\|_{\cH}^2  \nonumber \\
& \leq  2 \mathbb{E}\left[  \left\|  \frac{1}{k} \sum_{j \in {\rm NN}(x, k, X_n)} \Phi(P(\cdot \mid x_j))  -  \Phi( \hat{P}(\cdot \mid x) )  \right\|_{\cH}^2 \mid X_n \right]
+  \frac{8 C_{\rm ker}^2  \left( \cV_\cB \ln (n) - \ln(\delta) \right)}{k}. \label{eq:bound-849}
\end{align}

Define $\cH$-valued random variables
$$
z_j := \Phi(P(\cdot \mid x_j))  -  \Phi(y_j) \quad \text{for all} \ \ j \in \NN(x, k, X_n).
$$
These random variables are conditionally independent given $X_n$.
The conditional expectation of each $z_j$ given $X_n$ is zero, and the conditional variance is uniformly upper bounded due to the bound~(\ref{eq:bound-output-noise}):
\begin{align*}
& \mathbb{E}\left[ z_j \mid X_n \right]
 =  \mathbb{E}\left[ \Phi(P(\cdot \mid x_j))  -  \Phi(y_j) \mid X_n \right] =  \Phi(P(\cdot \mid x_j))  -  \mathbb{E}\left[ \Phi(y_j) \mid x_j \right] = 0,  \\
& \mathbb{E}\left[ \left\| z_j \right\|_\cH^2 \mid X_n \right]  = \mathbb{E}[ \left\| \Phi(P(\cdot \mid x_j))  -  \Phi(y_j) \right\|_{\cH}^2 \mid x_j] \leq 2 C_{\rm ker}^2.
\end{align*}
Therefore, the first term in the bound~(\ref{eq:bound-849}) can be expressed as (see also the definition of $\Phi( \hat{P}(\cdot \mid x) )$ in  (\ref{eq:mean-embeddings-cond-dists}))
\begin{align*}
&   \mathbb{E} \left[  \left\|  \frac{1}{k} \sum_{j \in {\rm NN}(x, k, X_n)}  \Phi(P(\cdot \mid x_j))  -  \Phi(y_j)  \right\|_{\cH}^2 \mid X_n \right]  =  \mathbb{E} \left[  \left\|  \frac{1}{k} \sum_{j \in {\rm NN}(x, k, X_n)} z_j \right\|_{\cH}^2 \mid X_n \right]  \\
& = \mathbb{E} \left[   \frac{1}{k^2} \sum_{j \in {\rm NN}(x, k, X_n)}  \left\| z_j \right\|_{\cH}^2 + \frac{1}{k^2} \sum_{j \not= m \in {\rm NN}(x, k, X_n)}  \left< z_j, z_m \right>_{\cH} \mid X_n \right] \\
& =   \frac{1}{k^2} \sum_{j \in {\rm NN}(x, k, X_n)} \mathbb{E} \left[  \left\| z_j \right\|_{\cH}^2 \mid X_n \right]
+ \frac{1}{k^2} \sum_{j \not= m \in {\rm NN}(x, k, X_n)}  \left< \mathbb{E} \left[ z_j \mid X_n \right] ,
 \mathbb{E} \left[ z_m \mid X_n \right] \right>_{\cH} \\
& = \frac{2C_{\rm ker}^2}{k}.
\end{align*}
The proof completes by using this expression in the bound~(\ref{eq:bound-849}) and noting that this bound is independent of $X_n$.

\end{proof}

\end{document}